\documentclass{article}
\usepackage{iclr2027_conference,times}
\usepackage{amsmath,amssymb,amsthm,mathtools}
\usepackage{graphicx,booktabs,array,multirow}
\usepackage{wrapfig,placeins,pifont}
\usepackage{microtype,xcolor,url,hyperref}

\hypersetup{colorlinks=true,linkcolor=black,citecolor=black,urlcolor=black,
  pdftitle={DoSA: Dots Sparse Attention},
  pdfauthor={Fanxu Meng, Taosong Fang}}
\title{DoSA: Dots Sparse Attention}
\author{%
\textbf{Fanxu Meng}\thanks{Equal contribution. Author order was determined by a coin toss.}\\
Dots Studio, Xiaohongshu Inc.\\
\href{mailto:mengfanxu@xiaohongshu.com}{\texttt{mengfanxu@xiaohongshu.com}}
\And
\textbf{Taosong Fang}\footnotemark[1]\thanks{Corresponding author.}\\
Dots Studio, Xiaohongshu Inc.\\
\href{mailto:fangtaosong@xiaohongshu.com}{\texttt{fangtaosong@xiaohongshu.com}}
}
\iclrfinalcopy 
\makeatletter
\renewcommand{\@maketitle}{\vbox{\hsize\textwidth\centering
  \def\@makefnmark{\hbox{\@textsuperscript{\normalfont\@thefnmark}}}%
  {\LARGE\scshape\@title\par}
  \def\And{\end{tabular}\hspace{2em}%
    \begin{tabular}[t]{c}\rule{\z@}{24pt}\ignorespaces}%
  \begin{tabular}[t]{c}\rule{\z@}{24pt}\@author\end{tabular}\par
  \vskip 0.6em
  {\normalfont\small\url{https://github.com/llm-sys/flash-indexer}\par}
  \vskip 0.3in minus 0.1in
}}
\makeatother
\newcommand{\method}{\textsc{DoSA}}
\newcommand{\relu}[1]{\left[#1\right]_{+}}
\newcommand{\norm}[1]{\left\lVert#1\right\rVert_2}
\newcommand{\topk}{\operatorname{TopK}}
\newcommand{\q}{\mathbf q}
\newcommand{\kv}{\mathbf k}
\newcommand{\RR}{\mathbb R}
\newcommand{\sref}{s^{\mathrm{ref}}}
\newtheorem{proposition}{Proposition}
\newtheorem{theorem}{Theorem}

\newcounter{algorithm}
\newcommand{\algline}[2]{\makebox[1.6em][r]{\scriptsize #1}\quad #2\par}
\begin{document}
\maketitle
\lhead{} 
\begin{abstract}
Sparse attention visits few keys, yet finding them can still require scoring every key with every indexer head. We introduce \method{} (\emph{Dots Sparse Attention}), a lossless method for making this computation conditional while preserving exact token selection. Its central observation is that Top-$K$ selection requires sufficient evidence to reject a candidate, not necessarily its full score. A seed set supplies a lower bound on the winning threshold. Nonnegative head projections and joint residual bounds upper-bound positive contributions, while activation-informed aggregation bounds the magnitude subtracted by negative heads. The certificate directs full scoring toward candidates that remain eligible; exact-score filtering further reduces writeback and selection work. We establish the certificate and its exact-selection conditions, and analyze how geometry and selection margins govern its effectiveness. Our GPU implementation accelerates the complete Indexer by up to $2.74\times$ over SGLang and $1.69\times$ over LiteTopK across English, Chinese, and code workloads. End-to-end evaluation on GLM-5.2 and DeepSeek-V3.2 achieves up to $1.94\times$ and $1.50\times$ time-to-first-token speedups over the two baselines, respectively.
\end{abstract}

\section{Introduction}
Conditional computation can scale model capacity while activating only part of the model for each input. Mixture-of-experts models apply this principle in feed-forward networks \citep{switch}; attention offers a complementary opportunity as contexts grow. Sparse attention visits few keys, yet identifying them can still require a dense scan. DeepSeek Sparse Attention (DSA) selects tokens with a multihead indexer followed by Top-$K$ \citep{deepseek32}; each score requires dot products, ReLU activations, and a signed weighted reduction. This leaves a dense scoring bottleneck inside sparse attention, limiting the MoE-style separation of capacity and active computation. We take a step toward scalable attention by making token selection conditional while preserving the original routing rule.

Figure~\ref{fig:indexer_cost} illustrates the resulting bottleneck. At approximately one million tokens, the indexer accounts for 73.0\% of prefill GPU time in GLM-5.2 and 90.8\% in DeepSeek-V3.2, while its shares of logical matrix FLOPs are 47.4\% and 83.2\%, respectively. As context grows, choosing where to attend can dominate the cost of sparse attention itself. The gap between time and arithmetic shares also motivates reducing selection and memory traffic alongside score computation.

\begin{figure}[t]
  \centering
  \includegraphics[width=\linewidth]{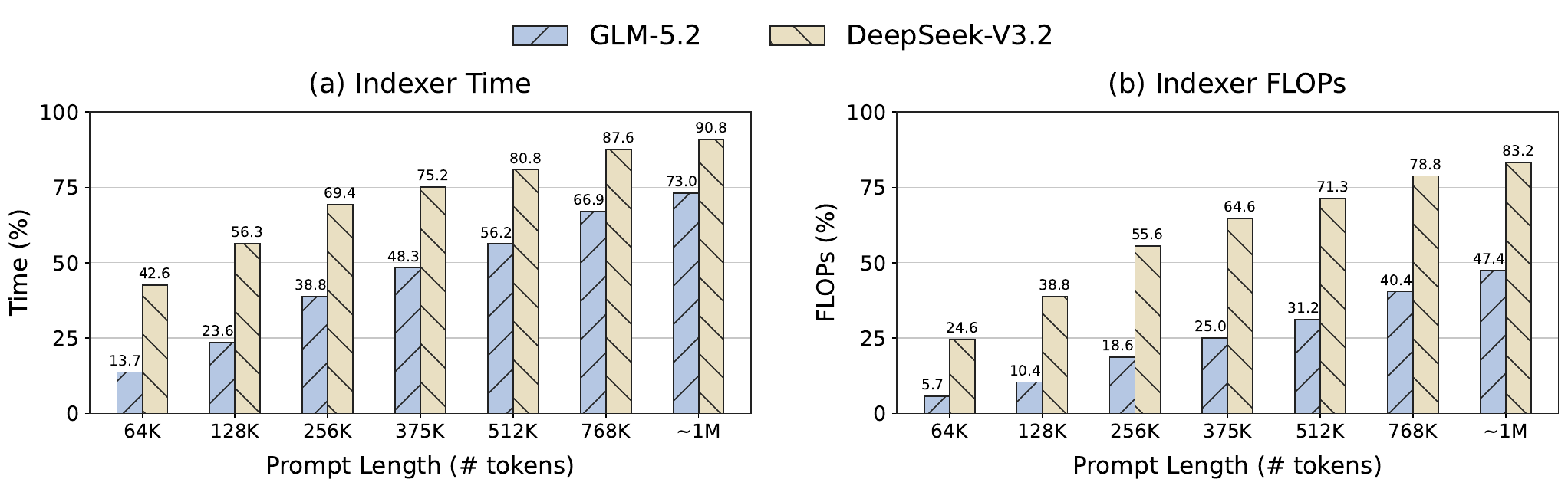}
  \caption{The indexer becomes a dominant cost of long-context prefill. Left: its fraction of prefill GPU time, including query/key preparation, scoring, and Top-$K$. Right: its fraction of theoretical matrix FLOPs. Each point averages two requests under SGLang; profiling and FLOP conventions are given in Appendix~\ref{app:intro_profile}.}
  \label{fig:indexer_cost}
\end{figure}

Recent systems accelerate selection: Guess-Verify-Refine (GVR) exploits temporal correlation in decoding \citep{gvr}; DeepSelect streams materialized scores through a bounded buffer \citep{deepselect}; and LiteTopK fuses scoring with candidate filtering to avoid dense score writeback \citep{litetopk}. These paths still require all-head evaluation for every key. MISA instead screens with fewer heads and reranks candidates with the full indexer \citep{misa}, but reranking cannot recover excluded winners. Can we \emph{avoid full-score computation while certifying that screening discards no winner?}

The principle is simple: \emph{a token cannot enter Top-$K$ if its score upper bound falls below a lower bound on the true Top-$K$ threshold.} A subset of at least $K$ visible keys supplies the latter through its $K$th largest score, following established exact-retrieval principles \citep{maxscore1995,wand2003}. The challenge is to construct a tight bound at substantially lower cost than the score itself. For the signed ReLU indexer, dropping positive-weight heads is unsafe; dropping negative-weight heads is safe but can leave loose bounds. A useful certificate must exploit the geometry and activation structure of the heads.

\method{}, a lossless optimization, constructs token-specific bounds by evaluating selected heads and bounding the rest. Nonnegative projections reuse retained dot products, and joint residual bounds exploit dependence among omitted positive heads. An activation-informed aggregate lower-bounds the magnitude subtracted by omitted negative heads. These query-side quantities are reused across keys. Figure~\ref{fig:pipeline} shows how the certificate leads to three complementary savings: fewer score computations, fewer score writes, and less Top-$K$ work. Survivors receive full original scores, and only scores meeting the seed-derived threshold are written for final selection.

Our contributions are a certificate tailored to the signed DSA score, an analysis connecting its slack to selection margins and head budgets, and a GPU realization that turns certified rejection into reduced computation and traffic. Across English, Chinese, and code workloads, \method{} outperforms both baselines in all 60 microbenchmark configurations, with up to $2.74\times$ complete-Indexer speedup over SGLang and $1.69\times$ over LiteTopK. At approximately 1M tokens, it achieves $1.86\times$ and $1.94\times$ time-to-first-token (TTFT) speedups over SGLang on GLM-5.2 and DeepSeek-V3.2, respectively, nearly halving end-to-end latency.

\section{Method}
\label{sec:method}

\begin{figure}[t]
  \centering
  \includegraphics[width=0.9\linewidth]{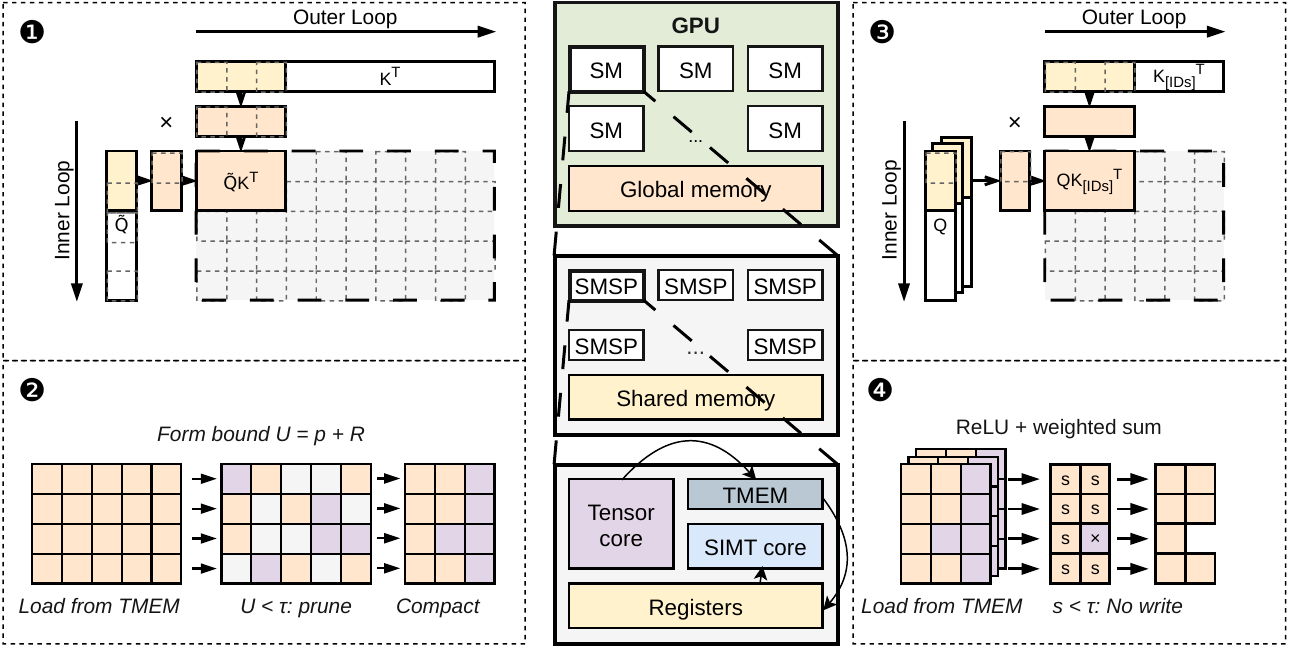}
  \caption{\method{} execution overview, read in order \ding{182}--\ding{185}: \ding{182} screening dot products; \ding{183} bound evaluation, pruning, and candidate compaction; \ding{184} full-head dot products for survivors; \ding{185} original-score reduction and sparse writeback. Tensor Cores execute stages~\ding{182} and~\ding{184}; SIMT threads execute stages~\ding{183} and~\ding{185} using accumulators from Tensor Memory (TMEM), with gray and purple data tiles indicating avoided computation and writeback, respectively. The center sketches GPU resources; seed initialization and final Top-$K$ selection are omitted.}
  \label{fig:pipeline}
\end{figure}

\subsection{Problem formulation}
\label{sec:problem}
For a fixed query with $H$ indexer heads, let $\q_h\in\RR^D$ be the $h$th query vector, $w_h\in\RR$ its weight, and $\kv_i\in\RR^D$ the key for token $i$. Write $[x]_+=\max(x,0)$. The indexer score is
\begin{equation}
  s_i=\sum_{h=1}^{H} w_h\relu{\q_h^\top\kv_i},\qquad i\in\Omega,
  \label{eq:score}
\end{equation}
where $\Omega$ contains only visible keys.

For $|\Omega|\ge K$, our goal is to recover the $K$ highest-scoring visible keys while avoiding full-head scoring for keys that cannot enter this set.

\subsection{Seed threshold and retained heads}
\label{sec:seed}
Choose distinct visible keys $\mathcal S\subseteq\Omega$, with $|\mathcal S|\ge K$, and set $\tau=\tau_K(\mathcal S)$ to their $K$th largest full score. The subset supplies $K$ witnesses:
\begin{equation}
  \tau_K(\mathcal S)\le\tau_K(\Omega),\qquad
  s_i\le U_i<\tau_K(\mathcal S)\ \Longrightarrow\ i\notin\topk(\Omega).
  \label{eq:prune}
\end{equation}
Thus $\tau_K(\mathcal S)$ is a lower bound on the true Top-$K$ threshold, and $U_i$ is an upper bound on the token score: their strict separation certifies rejection. A poor seed weakens filtering, and additional exact scores can only raise the threshold lower bound.

Bounding every head can leave all candidates above the threshold. We therefore directly evaluate selected heads whose contributions are difficult to bound tightly.

Let $R$ be a set of $1\le r<H$ retained heads. Define $\mathcal H_+=\{h:w_h>0\}$ and $\mathcal H_-=\{h:w_h<0\}$. If $|\mathcal H_+|\le r$, retain all positive heads and fill remaining positions with negative heads prioritized by $|w_h|\norm{\q_h}\min(p_h,1-p_h)$, where $p_h$ is their activation frequency on sampled seed keys. This favors strong heads whose activation is difficult to predict. Otherwise, retain the $r$ strongest positive heads by $w_h\norm{\q_h}$. Zero-weight heads need no contribution. Write $P=\mathcal H_+\setminus R$ and $N=\mathcal H_-\setminus R$. The retained contribution
\begin{equation}
  S_R(\kv_i)=\sum_{j\in R}w_j\relu{\q_j^\top\kv_i}
\end{equation}
is evaluated directly. The omitted heads in $P$ and $N$ are not evaluated individually during screening; their contributions are bounded in the next two subsections. The proof holds for any $R$; its choice affects efficiency and tightness.

\subsection{Upper-bounding omitted positive heads}
\label{sec:positive}
For each $h\in P$, choose a retained anchor and a nonnegative projection:
\begin{equation}
 j_h^\star=\arg\max_{j\in R}\frac{\relu{\q_h^\top\q_j}}{\norm{\q_j}},\quad
 \beta_h=\frac{\relu{\q_h^\top\q_{j_h^\star}}}{\norm{\q_{j_h^\star}}^2},\quad
 \mathbf d_h=\q_h-\beta_h\q_{j_h^\star}.
 \label{eq:projection}
\end{equation}
A zero-norm anchor has matching score and coefficient zero. ReLU subadditivity, positive homogeneity, and Cauchy--Schwarz give
\begin{equation}
 \relu{\q_h^\top\kv_i}
 \le \beta_h\relu{\q_{j_h^\star}^\top\kv_i}
       +\norm{\mathbf d_h}\norm{\kv_i}.
 \label{eq:positive}
\end{equation}
The projected part reuses an already computed dot product. The nonnegative projection minimizes the residual norm, yielding the smallest uniform norm allowance among single-anchor bounds of this form (Appendix~\ref{app:projection_optimal}). Define
\begin{equation}
 \widetilde w_j=w_j+\sum_{\substack{h\in P\\j_h^\star=j}}w_h\beta_h,
 \qquad E_P=\sum_{h\in P}w_h\norm{\mathbf d_h}.
 \label{eq:fold}
\end{equation}
Then retained and omitted positive terms together are bounded by
$\sum_{j\in R}\widetilde w_j[\q_j^\top\kv_i]_++E_P\norm{\kv_i}$.
The folded weights belong only to the screening plan; they do not modify model parameters. If $P$ is empty, $E_P=0$ and $\widetilde w_j=w_j$.

The omitted residuals also share the same key. Let $\mathbf u_h=w_h\mathbf d_h$ for $h\in P$. A joint allowance is
\begin{equation}
 E_P^{\rm joint}=\tfrac12\left(
 \left\lVert\sum_{h\in P}\mathbf u_h\right\rVert_2+
 \sqrt{\sum_{h,\ell\in P}|\mathbf u_h^\top\mathbf u_\ell|}\right)
 \le E_P.
 \label{eq:positive_joint}
\end{equation}
It bounds $\sum_{h\in P}[\mathbf u_h^\top\kv_i]_+$ by $E_P^{\rm joint}\norm{\kv_i}$ and can therefore replace $E_P$. The improvement exploits residual geometry without additional key reads. Appendix~\ref{app:positive_joint} proves this statement and its grouped form.

\subsection{Lower-bounding omitted negative contributions}
\label{sec:negative}
Negative heads subtract from the score. To construct an upper bound, we aggregate a subset $A\subseteq N$ and ignore the remaining heads; omitting negative contributions can only raise the bound. Define
\begin{equation}
 \mathbf a_A=\sum_{h\in A}|w_h|\q_h,\qquad
 L_N(\kv_i)=\relu{\mathbf a_A^\top\kv_i}
 \le \sum_{h\in N}|w_h|\relu{\q_h^\top\kv_i}.
 \label{eq:negative}
\end{equation}
ReLU subadditivity ensures that $L_N$ does not exceed the true amount subtracted by the omitted negative heads. Why not simply include every head in $A$? An inactive head contributes zero after its own ReLU, but its negative dot product can cancel active heads' positive dot products before the aggregate ReLU. Including such heads can therefore weaken the negative-contribution estimate and loosen the score upper bound.

We use seed activation patterns to reduce this cancellation. For sampled seed keys, record $M_i=\{h\in N:\q_h^\top\kv_i>0\}$ and let $p_h$ be the frequency of head $h$. We choose componentwise majority support, $A=\{h\in N:p_h\ge1/2\}$. This minimizes a separable upper bound on normalized negative slack; jointly, the retained-negative priority above removes the largest remaining terms of that surrogate (Appendix~\ref{app:support_surrogate}). If a new key has active set exactly $A$, the aggregate is exact. Other patterns remain safe and may give a looser bound. The frequencies guide efficiency, not correctness.

\subsection{Score bounds and candidate refinement}
\label{sec:refinement}
Combining Equations~\eqref{eq:positive}--\eqref{eq:negative} gives our score certificate:
\begin{equation}
 \boxed{U_i=\sum_{j\in R}\widetilde w_j\relu{\q_j^\top\kv_i}
           +E_P\norm{\kv_i}-\relu{\mathbf a_A^\top\kv_i},\qquad s_i\le U_i.}
 \label{eq:bound}
\end{equation}
One aggregate direction replaces arbitrarily many omitted negative heads. Its utility depends on activation agreement, while the positive allowance depends on directional similarity.

Figure~\ref{fig:pipeline} maps the certificate to four stages. Stage~\ding{182} reuses screening rows $\widetilde Q$ (retained and auxiliary) across key tiles; outer and inner loops traverse keys and query rows, respectively. Stage~\ding{183} forms $U=p+R$: $p$ combines the retained and aggregate terms in Equation~\eqref{eq:bound}, and the scalar $R=E_P\norm{\kv_i}$ is the residual allowance, distinct from the head set $R$. FP8 adds numerical guards (Appendix~\ref{app:numerical}).

Stage~\ding{183} discards keys with $U_i<\tau$ and compacts survivor IDs. Stage~\ding{184} gathers their keys and evaluates all $H$ original heads. Stage~\ding{185} applies the original ReLU and weights from Equation~\eqref{eq:score}, retaining only scores $s_i\ge\tau$ for selection with the seed records. In the data tiles of stages~\ding{183} and~\ding{185}, gray marks avoided computation and purple marks avoided writeback. Equality survives both filters; bounds never replace final scores.

With valid seed thresholds and no loss of live candidates, this procedure preserves exact Top-$K$ selection; Appendix~\ref{app:proofs} gives the complete algorithm, formal guarantee (Theorem~\ref{thm:exact}), and proof.

\section{GPU Kernel Design}
\label{sec:implementation}
\label{sec:streaming}
Certified pruning reduces arithmetic, but irregular survivor sets can limit GPU utilization and introduce scattered key accesses. During prefill, we organize \method{} into a regular screening pass followed by sparse full-head refinement (Figure~\ref{fig:kernel}). Three design choices translate the certificate into efficient execution: grouping queries for Tensor Core computation, sharing key loads during refinement, and reducing the records passed to selection.

\begin{figure}[t]
\centering
\includegraphics[width=\linewidth]{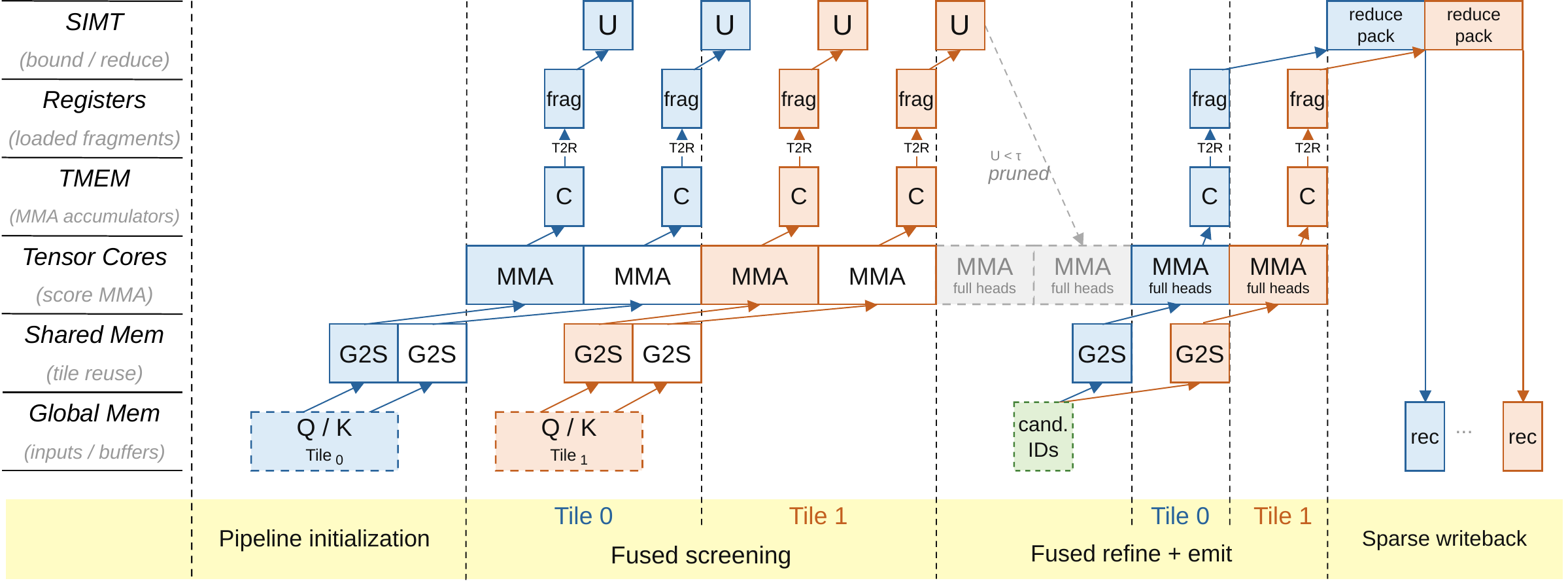}
\caption{Data flow through screening, refinement, and sparse writeback. Blue and orange illustrate successive tiles within each stage, where key loading overlaps computation. SIMT threads evaluate bounds $U$ from TMEM accumulators; gray full-head MMA blocks indicate work avoided through certified pruning. Compacted candidate IDs feed refinement, whose outputs are filtered before writeback. G2S and T2R denote global-to-shared and TMEM-to-register transfers.}
\label{fig:kernel}
\end{figure}

\paragraph{Query grouping and pipelined screening.}
We group queries with similar retained-head budgets into fixed-width tiles, enabling a small set of screening kernels to use dense Tensor Core MMA. Retained query rows, folded weights, and two FP8 auxiliary rows are prepared once and reused across key tiles. Exact scores of visible seed keys initialize each query's threshold, which remains fixed during the scan. Dedicated warps issue asynchronous key loads and MMA operations, while consumer warps evaluate bounds from TMEM accumulators. Buffered key tiles overlap loading with computation. Keys are rejected only when $U_i^{\rm num}<\tau$, and warp ballots pack the survivor decisions into compact masks for refinement.

\paragraph{Shared key access in refinement.}
Independent per-query gathers would repeatedly load keys selected by several queries. We therefore compact the survivor masks into shared and per-query refinement streams. Within a group of four queries, a key that survives at least two bounds and lies in the group's common visible range enters the shared stream; other candidates use per-query streams. A shared key load serves full-head scoring for the group, followed by the original ReLU, weights, and reduction. This organization trades some extra query--key evaluations for fewer key transfers. Each score is filtered against its query's threshold, preserving exact selection.

\paragraph{Compact writeback and exact selection.}
Refinement fuses full-score computation with threshold filtering: only scores $\sref_i\ge\tau$ produce output records. These records store compact monotone score codes with exact key IDs, reducing HBM traffic and selection input. After refinement, Top-$K$ selection retains records strictly above the boundary code and resolves candidates at the boundary using their original scores. Together, screening reduces full-head arithmetic, shared refinement reuses key data, and compact records reduce the cost of writing and selecting scores.

\paragraph{Decoding.}
With few queries per step, decoding offers limited key reuse across query groups. We retain full FP8 scoring and exploit temporal coherence to reduce selection work. Distinct keys selected in recent steps provide witnesses whose current-query scores yield a conservative lower bound on the Top-$K$ threshold. Scores at or above this bound are compacted for exact selection, which also maps the selected indices and updates the witnesses.

\setcounter{topnumber}{1}

\begin{figure}[t]
\centering
\includegraphics[width=\linewidth]{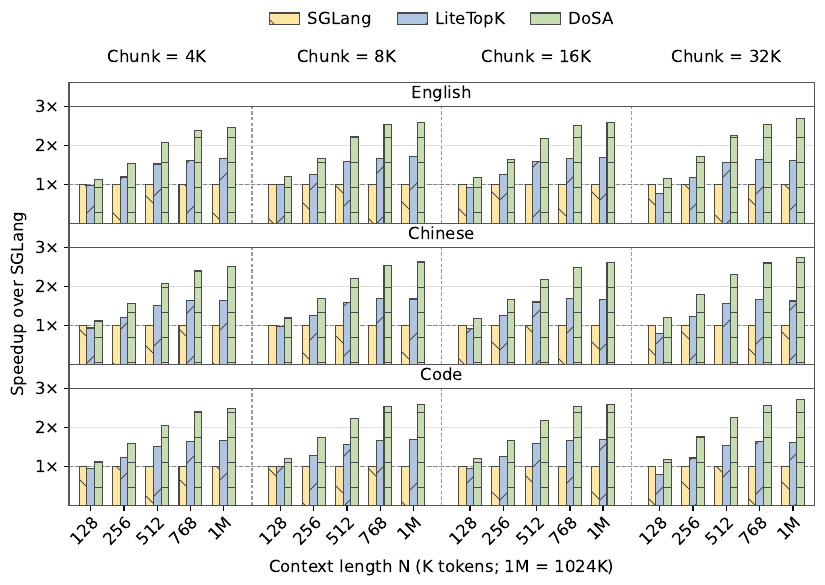}
\caption{Complete-Indexer speedup over SGLang across context lengths, input domains, and prefill chunk sizes.}
\label{fig:indexer_generalization}
\end{figure}

\begin{figure}[t]
\centering
\begin{minipage}[t]{0.25\linewidth}
\vspace{0pt}
\centering
\includegraphics[width=\linewidth]{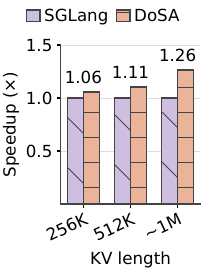}
\par
\begin{minipage}[t]{0.92\linewidth}
\caption{Decoding Indexer speedup over SGLang.}
\label{fig:decoding_indexer}
\end{minipage}
\end{minipage}%
\begin{minipage}[t]{0.5\linewidth}
\vspace{0pt}
\centering
\includegraphics[width=\linewidth]{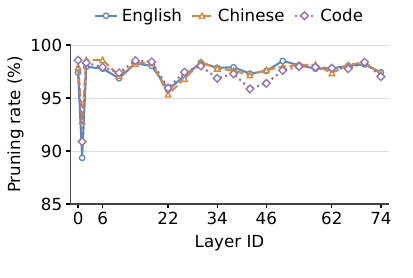}
\par
\begin{minipage}[t]{0.92\linewidth}
\caption{Certificate pruning rates across GLM-5.2 layers, averaged within each input domain.}
\label{fig:indexer_pruning}
\end{minipage}
\end{minipage}%
\begin{minipage}[t]{0.25\linewidth}
\vspace{0pt}
\centering
\includegraphics[width=\linewidth]{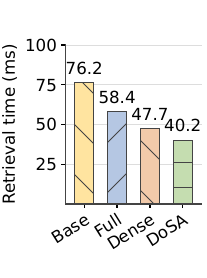}
\par
\begin{minipage}[t]{0.92\linewidth}
\caption{Screening and sparse writeback ablation.}
\label{fig:ablation}
\end{minipage}
\end{minipage}
\end{figure}

\begin{figure}[t]
\centering
\begin{minipage}[b]{0.49\linewidth}
\centering
\includegraphics[width=\linewidth]{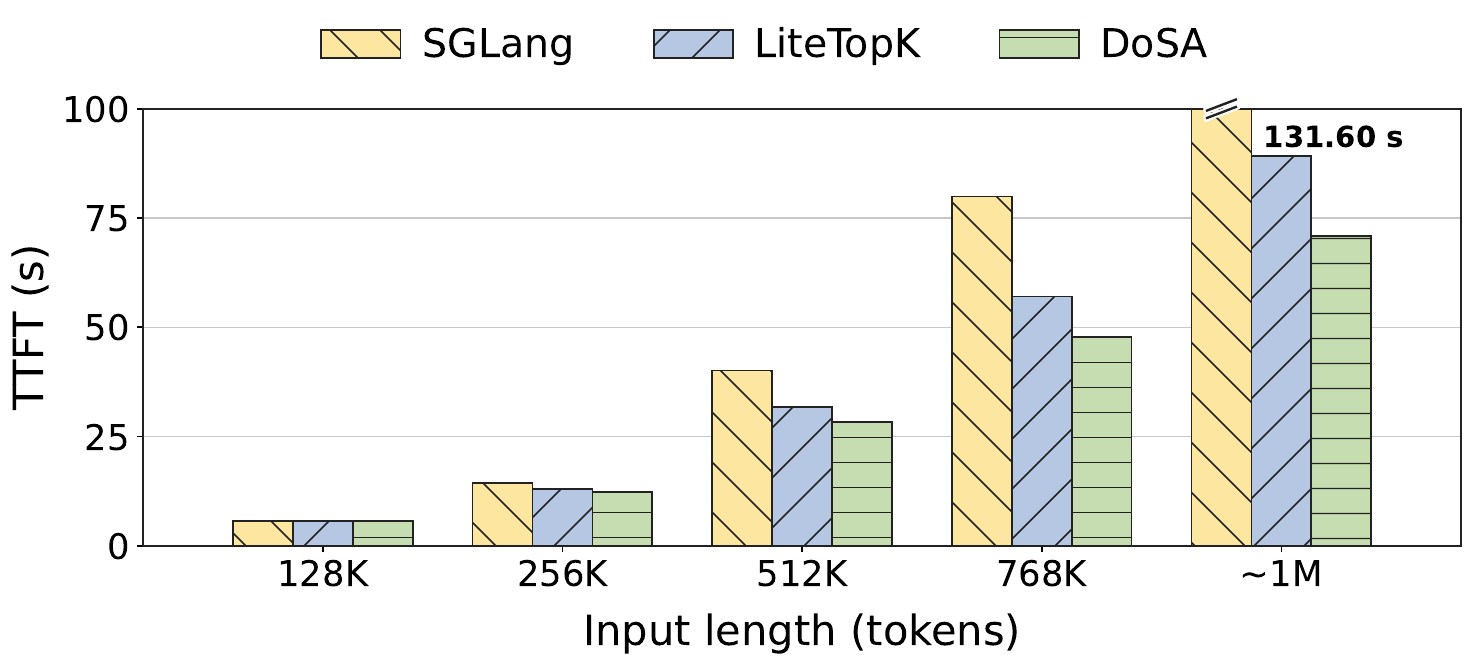}
\par\smallskip
{\small (a) GLM-5.2}
\end{minipage}\hfill
\begin{minipage}[b]{0.49\linewidth}
\centering
\includegraphics[width=\linewidth]{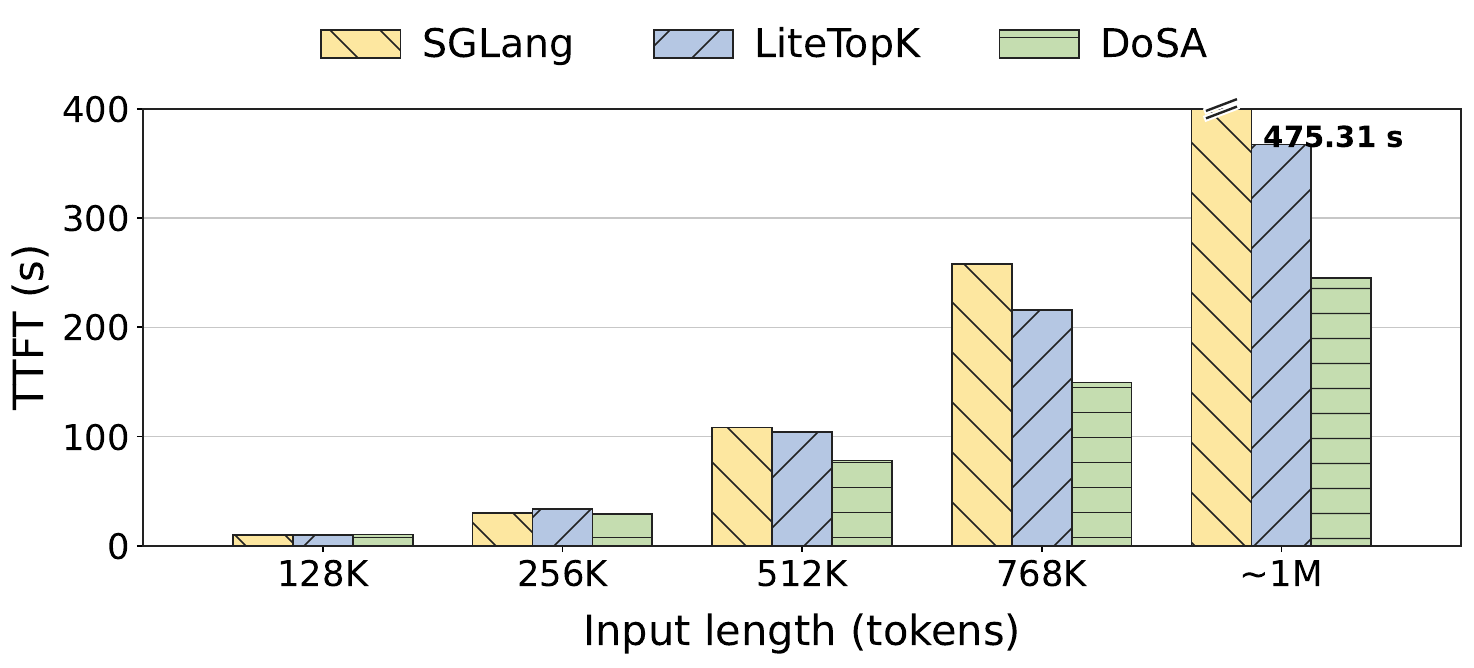}
\par\smallskip
{\small (b) DeepSeek-V3.2}
\end{minipage}
\caption{Time to first token across input lengths on (a) GLM-5.2 and (b) DeepSeek-V3.2.}
\label{fig:ttft_length}
\end{figure}

\section{Experiments}
\label{sec:experiments}
\method{} achieves up to $2.74\times$ complete-Indexer speedup and $1.94\times$ TTFT speedup over SGLang. We study how these gains vary with context length, input domain, and prefill chunk size, then isolate the contributions of screening and sparse writeback.

\subsection{Experimental setup}
\paragraph{Models and hardware.}
We evaluate GLM-5.2 and DeepSeek-V3.2 on NVIDIA GPUs. The models use 32 and 64 indexer heads, respectively, with Top-$K=2{,}048$. Microbenchmarks run GLM-5.2 on a single GPU. End-to-end inference uses eight-way tensor parallelism, with prefill chunks of 32K tokens for GLM-5.2 and 16K for DeepSeek-V3.2.

\paragraph{Workloads.}
We use English and Chinese books and code inputs from InfiniteBench~\citep{infinitebench}, supplemented with Project Gutenberg\footnote{\url{https://www.gutenberg.org/ebooks/100}; \url{https://www.gutenberg.org/ebooks/3200}} and SQLite source code\footnote{\url{https://www.sqlite.org/amalgamation.html}} for long contexts. For each domain, we select samples by tokenized length and extract contiguous contexts between 128K and approximately 1M tokens. The microbenchmarks use prefixes of 1M-token inputs; certificate analyses cover samples throughout this length range.

\paragraph{Baselines.}
We compare with SGLang v0.5.18~\citep{sglang} and LiteTopK~\citep{litetopk}. SGLang computes full multihead scores followed by Top-$K$ selection. LiteTopK fuses scoring, candidate filtering, and selection to reduce intermediate memory traffic. \method{} further reduces computation by using certificates to skip unnecessary full-head scoring.

\paragraph{Measurements.}
We measure complete-Indexer latency with CUDA events after warmup, using activations from model runs. End-to-end performance is measured by time to first token (TTFT) for individual requests, clearing the prefix cache between requests.

\subsection{Indexer microbenchmarks}
\label{sec:indexer_microbenchmarks}
\paragraph{Prefill performance.}
We vary context length over 128K, 256K, 512K, 768K, and 1M tokens, and prefill chunk size over 4K, 8K, 16K, and 32K. Each configuration replays the final query chunk across all 21 indexer-producing layers, including query/key preparation in the measured latency.

\method{} is the fastest method in all 60 configurations (Figure~\ref{fig:indexer_generalization}). Speedups over SGLang range from $1.12$ to $2.74\times$, exceeding $2\times$ in every configuration at 512K tokens and above. Against LiteTopK, \method{} achieves $1.17$--$1.69\times$ speedups. At 1M, the gains reach $2.47$--$2.74\times$ and $1.47$--$1.69\times$ over the two baselines, respectively.

\paragraph{Analysis.}
Context length drives the gain more strongly than query chunk size. For English inputs with 16K queries, speedup rises from $1.19\times$ at 128K to $2.58\times$ at 1M. By comparison, increasing the query chunk eightfold at 1M, from 4K to 32K, changes speedup from $2.47\times$ to $2.69\times$. Longer contexts expand the key set evaluated by each query while reusing its certificate preparation. Increasing the chunk size instead scales the query batch for all methods. Thus, \method{} gains most from reducing work along the long key dimension, while remaining effective across prefill batching choices.

The comparison with LiteTopK separates this effect from fusion. Fusion reduces score traffic and selection overhead, but leaves the full multihead computation intact. Certified rejection removes this remaining work, explaining why \method{} continues to pull ahead as contexts grow. Appendix~\ref{app:retrieval_layers} provides the layer-wise retrieval results.

\paragraph{Decoding performance.}
We build document-analysis requests from HELMET, InfiniteBench, and LongBench v2~\citep{helmet,infinitebench,longbench2}, and code-review requests from SQLite. Each workload fills an approximately 1M-token KV cache with equal-length requests. \method{} accelerates the decoding Indexer by $1.06\times$, $1.11\times$, and $1.26\times$ at request lengths of 256K, 512K, and approximately 1M, respectively (Figure~\ref{fig:decoding_indexer}). Here the gain comes from reducing selection work: recently selected keys provide thresholds for compact selection, benefiting decoding even with few queries per step.

\paragraph{Pruning across layers.}
We define the pruning rate as $p=1-N_{\mathrm{keep}}/N_{\mathrm{screen}}$, where $N_{\mathrm{screen}}$ and $N_{\mathrm{keep}}$ count query--key pairs before and after certificate screening. Figure~\ref{fig:indexer_pruning} shows pruning rates above 95\% in 20 of the 21 layers in each domain, with domain averages of 97.2\%--97.5\%. English, Chinese, and code exhibit similar profiles.

The variation follows model depth: all three domains have their lowest pruning rate at layer 1, where 89\%--93\% of pairs are still rejected, and exceed 95\% in every other layer. The shared pattern points to layer-dependent score geometry as a stronger influence than the input domain. In terms of the selection-margin analysis in Appendix~\ref{app:margin}, most keys lie sufficiently far below the Top-$K$ boundary that the certificate can reject them despite its slack. Full-head scoring is then concentrated on the small set of candidates that remain competitive.

\subsection{End-to-end inference performance}
\label{sec:ttft_length}
We measure TTFT on the book and code workloads from 128K to approximately 1M tokens, using TP8 and the model-specific prefill chunks described above.

\method{} nearly halves TTFT at 1M tokens on both models (Figure~\ref{fig:ttft_length}). It achieves $1.86\times$ and $1.94\times$ speedups over SGLang on GLM-5.2 and DeepSeek-V3.2, respectively, and $1.26\times$ and $1.50\times$ over LiteTopK. The gains over LiteTopK show that avoiding score computation improves serving latency even after scoring and selection have been fused.

\paragraph{Analysis.}
The growing Indexer share explains why the end-to-end gain strengthens with context length. With Indexer time fraction $f$ and Indexer speedup $s$, Amdahl's law gives an overall speedup of $S=(1-f+f/s)^{-1}$. For GLM-5.2, indexing grows from 23.6\% of prefill GPU time at 128K to 73.0\% at 1M; DeepSeek-V3.2 reaches 90.8\% at 1M (Figure~\ref{fig:indexer_cost}). As this fraction increases, accelerating indexing removes a larger share of request latency. Attention and feed-forward layers continue to contribute to request latency, explaining why TTFT speedup is smaller than Indexer speedup. This explains both the increasing gains in Figure~\ref{fig:ttft_length} and the value of targeting indexing for long-context serving.

\subsection{Ablation study}
We enable screening and sparse writeback independently: Base uses neither, Full uses sparse writeback alone, Dense uses screening alone, and \method{} combines both. The workload is a SciFact retrieval example from LOFT~\citep{loft}, with 16K queries and 896K keys; we average retrieval latency across all 21 indexer-producing layers.

Screening alone reduces latency by 37.4\%, and sparse writeback alone by 23.4\% (Figure~\ref{fig:ablation}). Together, they reduce retrieval latency by 47.3\%, yielding a $1.90\times$ speedup over Base. Adding screening to Full saves a further 31.1\%, while adding sparse writeback to Dense saves 15.7\%.

\paragraph{Analysis.}
Full avoids dense score writeback but still pays for every head evaluation; its remaining gap to \method{} exposes the computation that fusion alone cannot remove. Dense makes the opposite tradeoff: screening reduces score evaluation, while dense intermediate results continue to consume HBM bandwidth. The additional gain from sparse writeback therefore shows why pruning needs a matching execution strategy to realize its benefit.

The certificate-component ablation further identifies what makes pruning effective. Removing the negative aggregate raises candidate survival from 2.65\% to 95.06\% (Appendix~\ref{sec:certificate_ablation}), nearly eliminating rejection. The negative contribution is therefore essential to obtaining a tight certificate for the signed Indexer score. Appendix~\ref{sec:certificate_layers} shows how this effect varies across layers.

\FloatBarrier
\setcounter{topnumber}{2}

\section{Related work}
\label{sec:related}
\paragraph{Sparse attention with lightning indexers.}
Sparse attention restricts query--key interactions. NSA combines token compression with fine-grained selection \citep{nsa2025}; MoBA uses expert routing for block attention \citep{moba2025}; DSA selects tokens with a learned multihead indexer \citep{deepseek32}. MiniMax-M3 combines block attention with learned indexing \citep{minimaxmsa}. IndexCache reuses indices across layers \citep{indexcache,glm53,hy4preview}, while SparDA predicts next-layer selections \citep{sparda}. HISA uses hierarchical block indexing and token refinement for token-level attention \citep{hisa}; DeepSeek-V4.1-Flash and LongCat-2.0 combine hierarchical indexing with cross-layer reuse \citep{deepseek41flash,longcat2}. PIVOT shares proxy scans across query groups \citep{pivot}. MISA selects indexer heads with optional full-score reranking \citep{misa}; Qwen3.8-Flash-Next uses only four query heads in its indexer \citep{qwen38next}. These approaches need not preserve the original full scorer's Top-$K$, and reranking cannot recover excluded tokens. \method{} certifies rejection under the original scorer.

\paragraph{Exact GPU selection and fused scoring.}
Radix selection narrows Top-$K$ candidates through successive digit histograms, without fully sorting the scores \citep{zhang2023topk}. GVR uses temporal correlation to predict a decoding threshold, verifies it by counting materialized scores, and refines candidates in shared memory \citep{gvr}. DeepSelect streams materialized score rows through a bounded candidate buffer with a monotonically increasing threshold \citep{deepselect}. LiteTopK fuses full-head scoring with online candidate filtering, reducing score writeback and selection input \citep{litetopk}. These methods accelerate Top-$K$ selection but still rely on full-head scores for every key in DSA. \method{} builds on streaming thresholds and compact selection by adding a score upper bound that rejects keys \emph{before} full-head evaluation.

\paragraph{Score bounds in classical document retrieval.}
MaxScore \citep{maxscore1995} and WAND \citep{wand2003} use term-contribution upper bounds and a running Top-$K$ threshold to skip unnecessary document scoring. Block-Max WAND tightens these bounds for individual posting-list blocks \citep{blockmaxwand2011}. \method{} applies the same pruning principle to the signed multihead scores of DSA. A block-level upper bound must cover every key in the block and can therefore be loose for individual candidates. This motivates tighter, per-key bounds that exploit query-head geometry and activation structure to reject candidates before full-head scoring.

\section{Limitations and conclusion}
\method{} shows that exact token selection can require less work than computing every token's full score. Signed-head certificates, original-score completion, and sparse writeback turn this observation into conditional computation within the attention indexer. Query geometry determines how much work can be certified unnecessary; the head-budget analysis makes the cost of obtaining that certificate explicit.

The TTFT results show increasing benefits at long contexts, while the workload analysis explains the tradeoff between screening cost and avoided full scoring. \method{} still scans every key, and short contexts or loose bounds can leave insufficient savings to offset screening. Numerical guarantees require conservative evaluation relative to the original scorer; valid tied Top-$K$ sets need not yield identical generated text. The measurements establish computational benefits, not improved task accuracy beyond a model's trained context.

\paragraph{Future work.}
Training-time regularization of certificate slack could encourage an indexer whose computation is easier to skip safely. Following upper-bound pruning in document retrieval, future work could extend our per-key certificates to key-block bounds using compact summaries. Blocks whose bounds fall below the selection threshold could then be skipped without reading individual key vectors. These directions could reduce both scoring and key-access costs as context grows.

\label{lastmainpage}
\clearpage
\section*{AI use statement}
We used generative AI tools for manuscript drafting and revision and for drafting initial GPU kernel implementations, which were subsequently revised by the authors. The authors take responsibility for the final manuscript and implementation.

\section*{Reproducibility Statement}
Sections~\ref{sec:method} and~\ref{sec:implementation} describe the method and GPU implementation. Appendix~\ref{app:proofs} proves certificate validity and exact selection; Appendix~\ref{app:tightness} analyzes tightness and computational cost; Appendix~\ref{app:numerical} gives numerical conditions and recovery requirements. Section~\ref{sec:experiments} specifies the workloads and metrics, and Appendix~\ref{app:experimental_details} details indexer profiling, certificate ablations, and layer-wise retrieval measurements. Accompanying CPU checks verify algebraic relations and fixed-threshold selection.

\clearpage
\bibliography{iclr2027_conference}
\bibliographystyle{iclr2027_conference}
\clearpage
\appendix
\section{Certificate Validity and Exact Selection}
\label{app:proofs}
We establish the score bounds in real arithmetic and prove exact selection for a fixed query over its visible keys. Numerical conditions are given in Appendix~\ref{app:numerical}.

\subsection{Validity of the score certificate}
\label{app:certificate_validity}
\label{app:grouped}
\label{app:positive_joint}
\begin{proposition}[Positive-head projection bound]
For any $\beta_h\ge0$ and decomposition $\q_h=\beta_h\q_j+\mathbf d_h$,
\[
 [\q_h^\top\kv]_+\le\beta_h[\q_j^\top\kv]_++\norm{\mathbf d_h}\norm{\kv}.
\]
\end{proposition}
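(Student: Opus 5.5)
The plan is to expand the dot product through the decomposition and then apply three elementary facts about $x\mapsto[x]_+$: it is subadditive, it is positively homogeneous, and it is monotone with $[x]_+\le|x|$. Write
\[
\q_h^\top\kv=\beta_h\,\q_j^\top\kv+\mathbf d_h^\top\kv .
\]
Subadditivity, $[x+y]_+\le[x]_++[y]_+$, holds because $[x]_++[y]_+\ge x+y$ and $[x]_++[y]_+\ge0$. Applying it gives
\[
[\q_h^\top\kv]_+\le[\beta_h\,\q_j^\top\kv]_++[\mathbf d_h^\top\kv]_+ .
\]

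The two terms on the right are handled separately. For the first, I use $\beta_h\ge0$ and positive homogeneity, $[cx]_+=c[x]_+$ for $c\ge0$, to rewrite it as $\beta_h[\q_j^\top\kv]_+$. This is the only place the sign condition on $\beta_h$ enters. It is also the reason the paper clips the projection coefficient in Equation~\eqref{eq:projection} with $[\cdot]_+$: with a negative coefficient, homogeneity fails and the bound would require $[-\q_j^\top\kv]_+$ instead. For the second term, I chain $[\mathbf d_h^\top\kv]_+\le|\mathbf d_h^\top\kv|\le\norm{\mathbf d_h}\norm{\kv}$, using Cauchy--Schwarz in the last step.

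Combining the three displays gives the claim. There is no real obstacle here. The only point needing care is to state explicitly that nonnegativity of $\beta_h$ is exactly what licenses pulling it out of the ReLU. I would also remark that the argument never uses the specific choice of $\beta_h$ or of the anchor $j$. It therefore covers the zero-norm-anchor convention ($\beta_h=0$, $\mathbf d_h=\q_h$) as well as the optimized projection, which the paper defers to Appendix~\ref{app:projection_optimal}.
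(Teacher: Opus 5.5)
Your proof is correct and follows essentially the same route as the paper's: ReLU subadditivity on $\beta_h\q_j^\top\kv+\mathbf d_h^\top\kv$, positive homogeneity (licensed by $\beta_h\ge0$), and Cauchy--Schwarz to get $[\mathbf d_h^\top\kv]_+\le\norm{\mathbf d_h}\norm{\kv}$. Your remarks on why the coefficient in the projection is clipped and on the zero-norm-anchor case are also accurate.
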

\begin{proof}
For real $a,b$ and $c\ge0$, $[a+b]_+\le[a]_++[b]_+$ and $[ca]_+=c[a]_+$. Hence
\[
 [\beta_h\q_j^\top\kv+\mathbf d_h^\top\kv]_+
 \le\beta_h[\q_j^\top\kv]_++[\mathbf d_h^\top\kv]_+
 \le\beta_h[\q_j^\top\kv]_++\norm{\mathbf d_h}\norm{\kv}.
\]
Multiplying by $w_h>0$ and summing over $P$ gives the positive part of Equation~\eqref{eq:bound}. Any finite nonnegative coefficient is valid if its residual is bounded conservatively.
\end{proof}

\begin{proposition}[Negative-head support bound]
For any $A\subseteq N$, $[\mathbf a_A^\top\kv]_+\le\sum_{h\in N}|w_h|[\q_h^\top\kv]_+$.
\end{proposition}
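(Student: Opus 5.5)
The argument has two steps: first push the aggregate ReLU through the sum over $A$, then enlarge the index set from $A$ to $N$. As in the proof of the positive-head projection bound, it needs only subadditivity and positive homogeneity of $[\cdot]_+$, together with the fact that $[\cdot]_+\ge 0$.

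\emph{Step 1: subadditivity over $A$.} Expand $\mathbf a_A^\top\kv=\sum_{h\in A}|w_h|\,\q_h^\top\kv$. Repeated application of $[a+b]_+\le[a]_++[b]_+$ is an induction on $|A|$, with the empty sum giving $[0]_+=0$. Positive homogeneity $[ca]_+=c[a]_+$ holds for $c=|w_h|\ge0$. Together these give
\[
 [\mathbf a_A^\top\kv]_+\le\sum_{h\in A}[\,|w_h|\,\q_h^\top\kv\,]_+=\sum_{h\in A}|w_h|[\q_h^\top\kv]_+ .
\]

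\emph{Step 2: extend from $A$ to $N$.} Every term $|w_h|[\q_h^\top\kv]_+$ is nonnegative, and $A\subseteq N$. Adding the terms for $h\in N\setminus A$ therefore can only increase the right-hand side, which yields the claimed inequality.

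\emph{Step 3: consequence for the certificate.} Since $w_h=-|w_h|$ for $h\in N$, the omitted negative heads contribute exactly $-\sum_{h\in N}|w_h|[\q_h^\top\kv]_+$. By Steps 1 and 2 this is at most $-[\mathbf a_A^\top\kv]_+$. This is the negative part of Equation~\eqref{eq:bound}, and combined with the positive-head proposition it gives $s_i\le U_i$.

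There is no real obstacle. The only points needing care are the empty case $A=\emptyset$, where $\mathbf a_A=0$ and the bound is trivially $0$, and keeping track of the sign convention $w_h=-|w_h|$ in Step 3.
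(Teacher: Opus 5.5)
Your proof is correct and follows the same route as the paper: ReLU subadditivity with positive homogeneity over $A$, then enlarging the sum from $A$ to $N$ using nonnegativity of each term, with $A=\varnothing$ giving a left-hand side of zero. Your Step~3 adds the sign bookkeeping $w_h=-|w_h|$ that turns this into the negative part of the certificate $s_i\le U_i$, which the paper states in the surrounding text rather than in the proof itself.
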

\begin{proof}
For $c_h\ge0$, ReLU subadditivity gives $[\sum_{h\in A}c_hx_h]_+\le\sum_{h\in A}c_h[x_h]_+\le\sum_{h\in N}c_h[x_h]_+$. Substitute $c_h=|w_h|$ and $x_h=\q_h^\top\kv$. The result holds pointwise for any data-dependent support, including $A=\varnothing$, when the left-hand side is zero.
\end{proof}

Together, these bounds establish $s_i\le U_i$ in Equation~\eqref{eq:bound}. The following grouped and joint bounds can tighten its positive residual allowance.

\paragraph{Grouped positive residuals.}
Partition the coordinates into $G$ disjoint groups. In real arithmetic, let
$E_{P,g}=\sum_{h\in P}w_h\norm{\mathbf d_h^{(g)}}$. Then
\begin{align}
 \sum_{h\in P}w_h[\mathbf d_h^\top\kv_i]_+
 &\le\sum_{g=1}^G E_{P,g}\norm{\kv_i^{(g)}}\\
 &\le\left(\sum_gE_{P,g}^2\right)^{1/2}\norm{\kv_i}.
 \label{eq:groupbound}
\end{align}
Minkowski's inequality gives
\[
 \left(\sum_g E_{P,g}^2\right)^{1/2}
 =\left\lVert\sum_{h\in P}w_h(\norm{\mathbf d_h^{(1)}},\ldots,\norm{\mathbf d_h^{(G)}})\right\rVert_2
 \le\sum_{h\in P}w_h\norm{\mathbf d_h}=E_P.
\]
This tightens the residual coefficient with one RMS value per key.

\paragraph{Joint positive residuals.}
The grouped bound can also exploit dependence among residual directions. For one coordinate group, write $\mathbf u_h=w_h\mathbf d_h^{(g)}$ and define
\begin{equation}
 B_g=\tfrac12\left(
 \left\lVert\sum_{h\in P}\mathbf u_h\right\rVert_2+
 \sqrt{\sum_{h,\ell\in P}|\mathbf u_h^\top\mathbf u_\ell|}\right).
 \label{eq:group_joint}
\end{equation}
\begin{proposition}[Joint residual allowance]
For every $\mathbf k^{(g)}$,
$\sum_{h\in P}[\mathbf u_h^\top\mathbf k^{(g)}]_+
\le B_g\norm{\mathbf k^{(g)}}$, and
$B_g\le\sum_{h\in P}\norm{\mathbf u_h}=E_{P,g}$.
\end{proposition}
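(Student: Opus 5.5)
The plan is to split the ReLU into its linear and absolute-value parts, $[x]_+=\tfrac12(x+|x|)$, and bound each part by one of the two terms in $B_g$ from Equation~\eqref{eq:group_joint}. Fix a coordinate group $g$ and write $\mathbf k=\mathbf k^{(g)}$. Then
\[
 \sum_{h\in P}[\mathbf u_h^\top\mathbf k]_+
 =\tfrac12\Bigl(\sum_{h\in P}\mathbf u_h^\top\mathbf k+\sum_{h\in P}|\mathbf u_h^\top\mathbf k|\Bigr).
\]
The linear part equals $\bigl(\sum_{h\in P}\mathbf u_h\bigr)^\top\mathbf k$. By Cauchy--Schwarz it is at most $\bigl\lVert\sum_{h\in P}\mathbf u_h\bigr\rVert_2\norm{\mathbf k}$, which is the first term of $B_g$ times $\norm{\mathbf k}$.

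For the absolute-value part, I will use key-dependent signs $\sigma_h=\operatorname{sign}(\mathbf u_h^\top\mathbf k)\in\{-1,1\}$, taking $\sigma_h=1$ when the product is zero. Then $\sum_h|\mathbf u_h^\top\mathbf k|=\bigl(\sum_h\sigma_h\mathbf u_h\bigr)^\top\mathbf k$. Cauchy--Schwarz bounds this by $\bigl\lVert\sum_h\sigma_h\mathbf u_h\bigr\rVert_2\norm{\mathbf k}$. Expanding the square gives
\[
 \Bigl\lVert\sum_h\sigma_h\mathbf u_h\Bigr\rVert_2^2=\sum_{h,\ell\in P}\sigma_h\sigma_\ell\,\mathbf u_h^\top\mathbf u_\ell\le\sum_{h,\ell\in P}|\mathbf u_h^\top\mathbf u_\ell|.
\]
This bound does not depend on the signs. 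Hence the absolute-value part is at most $\sqrt{\sum_{h,\ell}|\mathbf u_h^\top\mathbf u_\ell|}\,\norm{\mathbf k}$. Combining the two parts gives $\sum_{h\in P}[\mathbf u_h^\top\mathbf k]_+\le B_g\norm{\mathbf k}$ for every $\mathbf k$.

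The main obstacle is that the sign pattern $\sigma$ depends on $\mathbf k$, whereas $B_g$ must be a query-side constant. The step above removes this dependence by majorizing over all sign patterns, using $\sigma_h\sigma_\ell\,\mathbf u_h^\top\mathbf u_\ell\le|\mathbf u_h^\top\mathbf u_\ell|$ for each pair $(h,\ell)$.

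For the comparison with $E_{P,g}$, I will use the triangle inequality, $\bigl\lVert\sum_h\mathbf u_h\bigr\rVert_2\le\sum_h\norm{\mathbf u_h}$. Cauchy--Schwarz entrywise gives $\sum_{h,\ell}|\mathbf u_h^\top\mathbf u_\ell|\le\sum_{h,\ell}\norm{\mathbf u_h}\norm{\mathbf u_\ell}=\bigl(\sum_h\norm{\mathbf u_h}\bigr)^2$, so the square root is at most $\sum_h\norm{\mathbf u_h}$. Averaging the two bounds yields $B_g\le\sum_{h\in P}\norm{\mathbf u_h}$. Since $w_h>0$ on $P$, we have $\norm{\mathbf u_h}=w_h\norm{\mathbf d_h^{(g)}}$, so this sum equals $E_{P,g}$.

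With a single group ($G=1$) the statement reduces to Equation~\eqref{eq:positive_joint}, including $E_P^{\rm joint}\le E_P$. Substituting it into Equation~\eqref{eq:groupbound} gives the grouped form.
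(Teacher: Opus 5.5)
Your proof is correct and follows the paper's argument essentially step for step: the split $[x]_+=\tfrac12(x+|x|)$, Cauchy--Schwarz on the centered sum, a sign vector with $\norm{\sum_h\sigma_h\mathbf u_h}^2\le\sum_{h,\ell}|\mathbf u_h^\top\mathbf u_\ell|$, and then the triangle inequality together with $\sum_{h,\ell}|\mathbf u_h^\top\mathbf u_\ell|\le(\sum_h\norm{\mathbf u_h})^2$. Your key-dependent choice of $\sigma$ is exactly the maximizer in the paper's $\max_{\sigma}$ formulation, and your remark that $w_h>0$ on $P$ is what makes $\sum_h\norm{\mathbf u_h}$ equal $E_{P,g}$ is the right justification for the final identity.
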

\begin{proof}
Using $[x]_+=(x+|x|)/2$ gives
\[
 \sum_h[\mathbf u_h^\top\mathbf k^{(g)}]_+
 =\tfrac12\left[
   \left(\sum_h\mathbf u_h\right)^\top\mathbf k^{(g)}
   +\sum_h|\mathbf u_h^\top\mathbf k^{(g)}|\right].
\]
The absolute-value sum equals
$\max_{\sigma_h\in\{-1,1\}}(\sum_h\sigma_h\mathbf u_h)^\top\mathbf k^{(g)}$.
For every sign vector,
$\norm{\sum_h\sigma_h\mathbf u_h}^2
\le\sum_{h,\ell}|\mathbf u_h^\top\mathbf u_\ell|$.
Cauchy--Schwarz yields the first claim. The triangle inequality bounds the center norm by $\sum_h\norm{\mathbf u_h}$, and
$\sum_{h,\ell}|\mathbf u_h^\top\mathbf u_\ell|
\le(\sum_h\norm{\mathbf u_h})^2$ proves the second.
\end{proof}
ReLU subadditivity and Cauchy--Schwarz combine groups into $\sqrt{\sum_g B_g^2}\norm{\kv}$. A single group proves Equation~\eqref{eq:positive_joint}. The bound needs no additional key summary, but its query-side cost is quadratic in the number of omitted positive heads.

\subsection{Fixed-threshold algorithm and exact Top-\texorpdfstring{$K$}{K} selection}
\label{app:exact_selection}
Assume finite scores and $|\Omega|\ge K$. Top-$K$ returns $K$ distinct token IDs; identical tied IDs require a common total order. A seed set of distinct visible keys supplies a threshold that remains fixed during screening and refinement.

\begin{proposition}[Subset threshold]
For any $\mathcal S\subseteq\Omega$ with $|\mathcal S|\ge K$, $\tau_K(\mathcal S)\le\tau_K(\Omega)$.
\end{proposition}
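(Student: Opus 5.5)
The plan is a direct witness argument. I first fix the meaning of $\tau_K(X)$ for a finite multiset of scores indexed by distinct keys. It is the $K$th entry when the scores $\{s_i\}_{i\in X}$ are sorted in nonincreasing order, with repetitions counted. Equivalently,
\[
 \tau_K(X)=\max\{t:\ |\{i\in X: s_i\ge t\}|\ge K\}.
\]
The maximum is attained at one of the finitely many score values, and it exists because $|X|\ge K$ and all scores are finite. Before using this characterization I will verify it from the sorted-order definition, since ties make the order-statistic formulation slightly delicate.

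With this characterization, let $t=\tau_K(\mathcal S)$. By definition, at least $K$ distinct keys $i\in\mathcal S$ satisfy $s_i\ge t$. Since $\mathcal S\subseteq\Omega$, these same keys lie in $\Omega$, and their scores are unchanged, because $s_i$ in Equation~\eqref{eq:score} depends only on the query and $\kv_i$, not on the set under consideration. Hence $|\{i\in\Omega:s_i\ge t\}|\ge K$. Maximality of $\tau_K(\Omega)$ then gives $t\le\tau_K(\Omega)$.

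An equivalent phrasing avoids the sup characterization. Suppose instead that $\tau_K(\Omega)<\tau_K(\mathcal S)$. Then the $K$ witnesses from $\mathcal S$ with scores at least $\tau_K(\mathcal S)$ are $K$ keys of $\Omega$ whose scores strictly exceed $\tau_K(\Omega)$. That forces at least $K$ entries of $\Omega$'s sorted list to lie strictly above its own $K$th entry, which is a contradiction, since at most $K-1$ entries precede the $K$th.

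The only genuine obstacle is the bookkeeping under ties and the requirement that seed keys be distinct. Duplicated IDs would inflate the count in $\mathcal S$ without supplying distinct witnesses in $\Omega$, which is exactly why the statement requires a subset of distinct visible keys. I will state explicitly that the count runs over distinct indices. Everything else is immediate.
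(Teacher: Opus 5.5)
Your proof is correct and is essentially the paper's argument: the $K$ distinct keys of $\mathcal S$ with score at least $\tau_K(\mathcal S)$ are also keys of $\Omega$, so the $K$th largest score over $\Omega$ cannot be smaller. Your characterization $\tau_K(X)=\max\{t:|\{i\in X:s_i\ge t\}|\ge K\}$ simply spells out, with ties handled, the one-line step the paper leaves implicit.
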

\begin{proof}
At least $K$ distinct elements of $\mathcal S$, hence of $\Omega$, have score at least $\tau_K(\mathcal S)$. The full-domain $K$th score can therefore be no smaller.
\end{proof}

\begin{figure}[htbp]
\small
\hrule\vspace{5pt}
\refstepcounter{algorithm}\textbf{Algorithm \thealgorithm}\quad \method{} with a fixed seed threshold\label{alg:dosa}\par
\vspace{3pt}
\algline{1}{Choose distinct visible seed keys $\mathcal S\subseteq\Omega$, $|\mathcal S|\ge K$.}
\algline{2}{Score $\mathcal S$ with the original scorer; $\tau\gets\tau_K(\mathcal S)$.}
\algline{3}{Build $R$, anchors, $A$, and the score bound in Equation~\eqref{eq:bound}.}
\algline{4}{$V\gets\{i\in\Omega\setminus\mathcal S:U_i\ge\tau\}$.}
\algline{5}{Score every $i\in V$ using the original $H$-head scorer.}
\algline{6}{$C\gets\{i\in V:s_i\ge\tau\}$.}
\algline{7}{Return $\topk(\mathcal S\cup C)$.}
\vspace{4pt}\hrule
\smallskip{\footnotesize Both filters retain equality. Final selection uses original scores and the common tie order when identical tied IDs are required.}
\end{figure}

\begin{theorem}[Exact Top-$K$ selection]
\label{thm:exact}
In Algorithm~\ref{alg:dosa}, let $\tau=\tau_K(\mathcal S)$ come from at least $K$ distinct visible keys scored by the original scorer. Suppose every key rejected during screening satisfies $s_i\le U_i<\tau$, and no live candidate is lost. With a common total tie order, the algorithm returns the same Top-$K$ as full-domain scoring. Without a prescribed tie order, it returns a valid Top-$K$ set under Equation~\eqref{eq:score}.
\end{theorem}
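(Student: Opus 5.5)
The argument reduces exact selection to one containment: the algorithm's final pool $\mathcal S\cup C$ contains every key of a fixed full-domain Top-$K$ set $T$. Once this holds, selecting from the pool returns $T$.

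\textbf{Step 1: no member of $T$ is discarded.} Fix the common total tie order and let $T=\topk(\Omega)$ under this order. Every $i\in T$ satisfies $s_i\ge\tau_K(\Omega)$. By the Subset threshold proposition, $\tau_K(\Omega)\ge\tau$, so $s_i\ge\tau$.

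Take any $i\in T\setminus\mathcal S$. If screening rejected $i$, the hypothesis gives $s_i\le U_i<\tau$, which contradicts $s_i\ge\tau$. Hence $i\in V$. Since no live candidate is lost, $i$ is scored exactly, and the line-6 filter keeps it because $s_i\ge\tau$. Therefore $T\subseteq\mathcal S\cup C$.

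\textbf{Step 2: selection from the pool returns $T$.} The pool $\mathcal S\cup C$ is a subset of $\Omega$ with at least $K$ elements. It carries original scores and the same total order. Any $K$ ordered-largest elements of $\Omega$ that lie in a subset are also the ordered-largest elements of that subset. This holds because, under a strict total order, the Top-$K$ of a superset restricted to a subset containing it is unchanged.

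So $\topk(\mathcal S\cup C)=T$, which is exactly the full-domain Top-$K$. The seed keys need no screening, since their original scores were computed directly on line~2.

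\textbf{Step 3: no prescribed tie order.} Choose any total order that extends the score order, for example by breaking ties by ID. Steps 1 and 2 then show the output equals the full-domain Top-$K$ under that order. That set is a valid Top-$K$ set for the scores in Equation~\eqref{eq:score}.

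One case needs extra care: the algorithm's own tie-breaking among boundary candidates may differ from the arbitrary extension chosen above. To handle this, I would argue directly. The output is $K$ elements of the pool with maximal scores. Every element of $\Omega$ outside the pool has score $<\tau\le\tau_K(\Omega)$, or else it is a live candidate, which cannot have been removed. So the $K$th score of the pool equals $\tau_K(\Omega)$, and any Top-$K$ of the pool is a valid Top-$K$ of $\Omega$.

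\textbf{Main obstacle.} The main difficulty is the bookkeeping around ties at the threshold. The proof must check that both filters retain equality, that keys with $s_i=\tau$ survive, and that the pool's $K$th score matches $\tau_K(\Omega)$ exactly. Comparing the $K$th order statistic of the pool with that of $\Omega$, as in Step 3, settles it.
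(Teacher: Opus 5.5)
Your proof is correct and follows essentially the paper's argument. The paper proves the contrapositive of your Step 1: every discarded key has $s_i<\tau_K(\mathcal S)\le\tau_K(\Omega)$, so it cannot be a winner under any tie order. It then uses the same observation that selecting from a superset of the winners returns them, and in the untied case it uses the same fact that the pool contains every key scoring at or above the global $K$th score; your explicit check that the pool's $K$th score equals $\tau_K(\Omega)$ just spells out the paper's last sentence in more detail.
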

\begin{proof}[Proof of Theorem~\ref{thm:exact}]
Consider any key $i\notin\mathcal S\cup C$. If screening rejects it, then $s_i\le U_i<\tau$. Otherwise it belongs to $V\setminus C$ and its original score satisfies $s_i<\tau$. In both cases, the subset-threshold proposition gives
\[
 s_i<\tau_K(\mathcal S)\le\tau_K(\Omega).
\]
Thus no discarded key can belong to the full-domain Top-$K$, including under any tie order. Since $\mathcal S\cup C$ contains at least $K$ distinct keys and every possible winner, applying the same total order to its original scores returns exactly $\topk(\Omega)$. Without a prescribed tie order, selection includes every score strictly above the global $K$th score and fills the remaining places with distinct keys at that score. Identical tied IDs are not required in that case.
\end{proof}
Candidate-storage and recovery requirements that enforce the no-loss condition are given in Appendix~\ref{app:recovery}.

\section{Certificate Tightness and Computational Cost}
\label{app:tightness}
We analyze bound slack, selection margins, and screening cost in finite real arithmetic. These determine efficiency separately from correctness.

\subsection{Minimum uniform allowance for a single anchor}
\label{app:projection_optimal}
\begin{proposition}[Optimal single-anchor allowance]
\label{prop:projection_optimal}
For vectors $\q,\mathbf p\in\RR^D$, consider all pairs $\beta,c\ge0$ satisfying
\begin{equation}
 [\q^\top\kv]_+\le\beta[\mathbf p^\top\kv]_++c\norm{\kv}
 \quad\text{for every }\kv\in\RR^D.
 \label{eq:app_anchor_class}
\end{equation}
The smallest achievable coefficient is $c^*=\min_{\beta\ge0}\norm{\q-\beta\mathbf p}$, the distance from $\q$ to the nonnegative ray generated by $\mathbf p$. For $\mathbf p\ne0$, it is attained by
\[
 \beta^*=\frac{[\q^\top\mathbf p]_+}{\norm{\mathbf p}^2},\qquad
 c^*=\norm{\q-\beta^*\mathbf p}.
\]
For $\mathbf p=0$, take $\beta^*=0$ and $c^*=\norm{\q}$.
\end{proposition}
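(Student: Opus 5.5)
The proof has two halves: show that $c=\norm{\q-\beta\mathbf p}$ is admissible for each $\beta\ge0$, and show that no admissible pair $(\beta,c)$ can have $c$ below the distance from $\q$ to the ray. The minimum over $\beta$ is then a convex projection problem, which identifies $\beta^*$.

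\emph{Sufficiency.} Fix $\beta\ge0$ and set $\mathbf d=\q-\beta\mathbf p$. The Positive-head projection bound above, with $\q_h=\q$, $\q_j=\mathbf p$, $\beta_h=\beta$, gives
\[
 [\q^\top\kv]_+\le\beta[\mathbf p^\top\kv]_++\norm{\mathbf d}\norm{\kv}
\]
for every $\kv$. So $(\beta,\norm{\q-\beta\mathbf p})$ satisfies Equation~\eqref{eq:app_anchor_class}, and hence $c^*\le\min_{\beta\ge0}\norm{\q-\beta\mathbf p}$.

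\emph{Necessity.} Take an admissible pair $(\beta,c)$ and let $\mathbf d=\q-\beta\mathbf p$. If $\mathbf d=0$ there is nothing to show. Otherwise we want a test key that kills the anchor term while making $\q^\top\kv$ large relative to $\norm{\kv}$. The natural choice $\kv=\mathbf d$ only kills the anchor term when $\mathbf p^\top\mathbf d\le0$, so we should use the ray-optimal residual instead. Let $\beta^*$ minimize $\norm{\q-\beta\mathbf p}$ over $\beta\ge0$, and set $\mathbf d^*=\q-\beta^*\mathbf p$. The KKT conditions for projection onto a closed convex cone give $\mathbf p^\top\mathbf d^*\le0$ and $\beta^*\,\mathbf p^\top\mathbf d^*=0$. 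Plug $\kv=\mathbf d^*$ into the admissible inequality. Since $\mathbf p^\top\mathbf d^*\le0$, the anchor term vanishes for every $\beta\ge0$. On the left,
\[
 \q^\top\mathbf d^*=\norm{\mathbf d^*}^2+\beta^*\mathbf p^\top\mathbf d^*=\norm{\mathbf d^*}^2\ge0.
\]
The inequality therefore becomes $\norm{\mathbf d^*}^2\le c\norm{\mathbf d^*}$, i.e.\ $c\ge\norm{\mathbf d^*}$. This holds for any admissible $\beta$, not just $\beta^*$, which is the key point. Combined with sufficiency, this proves $c^*=\norm{\mathbf d^*}$.

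\emph{Closed form.} For $\mathbf p\ne0$, minimize the convex quadratic $\beta\mapsto\norm{\q}^2-2\beta\q^\top\mathbf p+\beta^2\norm{\mathbf p}^2$ on $\beta\ge0$. Its unconstrained minimizer is $\q^\top\mathbf p/\norm{\mathbf p}^2$; clipping at zero gives $\beta^*=[\q^\top\mathbf p]_+/\norm{\mathbf p}^2$. Both cases can be checked directly against the KKT conditions. If $\beta^*>0$, then $\mathbf p^\top\mathbf d^*=0$. If $\beta^*=0$, then $\mathbf d^*=\q$ and $\mathbf p^\top\q\le0$. For $\mathbf p=0$, the anchor term is identically zero, and $\kv=\q$ gives $c\ge\norm{\q}$, attained with $\beta^*=0$.

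The main obstacle is necessity. The lower bound must hold uniformly over all admissible $\beta$, so the test key has to be chosen independently of the given $\beta$. Using the cone-projection residual $\mathbf d^*$, whose sign condition $\mathbf p^\top\mathbf d^*\le0$ removes the anchor term for every $\beta$, is what makes the argument work.
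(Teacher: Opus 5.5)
Your proof is correct and follows essentially the same route as the paper. Both get feasibility from the positive-head projection bound, and both get the lower bound by testing with the ray-projection residual, whose nonpositive inner product with $\mathbf p$ removes the anchor term for every admissible $\beta$. The paper splits into the cases $\q^\top\mathbf p\le0$ (test key $\q/\norm{\q}$) and $\q^\top\mathbf p>0$ (test key $\mathbf d/\norm{\mathbf d}$ with $\mathbf d\perp\mathbf p$), whereas your KKT conditions $\mathbf p^\top\mathbf d^*\le0$ and $\beta^*\mathbf p^\top\mathbf d^*=0$ cover both cases at once with the same test keys.
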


\begin{proof}[Proof of Proposition~\ref{prop:projection_optimal}]
The positive-head projection bound already establishes feasibility of $(\beta^*,c^*)$. We prove that no choice of $\beta$ permits a smaller $c$. The case $\q=0$ is immediate. If $\q^\top\mathbf p\le0$, substitute $\kv=\q/\norm{\q}$ into Equation~\eqref{eq:app_anchor_class}. Its anchor term vanishes for every $\beta\ge0$, giving $c\ge\norm{\q}=c^*$. This includes a zero anchor. Otherwise let $\mathbf d=\q-\beta^*\mathbf p$. The projection satisfies $\mathbf d^\top\mathbf p=0$. When $\mathbf d\ne0$, substituting $\kv=\mathbf d/\norm{\mathbf d}$ again makes the anchor term vanish, while $[\q^\top\kv]_+=\norm{\mathbf d}$. Thus $c\ge c^*$ for every $\beta$. If $\mathbf d=0$, feasibility with $c=0$ attains the trivial lower bound.
\end{proof}

Minimizing $c^*$ over retained anchors gives Equation~\eqref{eq:projection}. This optimizes the uniform norm allowance, not the complete bound pointwise or the retained-head set. Rounded coefficients remain valid when their actual residuals are conservatively bounded.

\subsection{Certificate slack and activation-aware head selection}
\label{app:slack}
\label{app:support_surrogate}
For one key, abbreviate $x_h=\q_h^\top\kv$ and $c_h=|w_h|$. Let
\begin{align*}
 B_h(\kv)&=\beta_h[\q_{j_h^\star}^\top\kv]_++\norm{\mathbf d_h}\norm{\kv},\\
 \delta_P(\kv)&=\sum_{h\in P}w_h\bigl(B_h(\kv)-[x_h]_+\bigr),\\
 Z_A^+(\kv)&=\sum_{h\in A}c_h[x_h]_+,\qquad
 Z_A^-(\kv)=\sum_{h\in A}c_h[-x_h]_+.
\end{align*}

\begin{proposition}[Exact certificate slack]
\label{prop:slack}
With the notation above, the bound in Equation~\eqref{eq:bound} has three nonnegative slack components:
\begin{equation}
 \boxed{U(\kv)-s(\kv)=
 \underbrace{\delta_P(\kv)}_{\text{positive projection}}
 +\underbrace{\sum_{h\in N\setminus A}c_h[x_h]_+}_{\text{missed active negatives}}
 +\underbrace{\min\{Z_A^+(\kv),Z_A^-(\kv)\}}_{\text{aggregate cancellation}}.}
 \label{eq:slack}
\end{equation}
\end{proposition}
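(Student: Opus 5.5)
We will write both $U(\kv)$ and $s(\kv)$ as sums over the same four head classes, $R$, $P$, $A$, and $N\setminus A$, and subtract them term by term. Using $R\cap(P\cup N)=\varnothing$ and the fact that zero-weight heads contribute nothing, the exact score splits as
\[
 s(\kv)=S_R(\kv)+\sum_{h\in P}w_h[x_h]_+-\sum_{h\in A}c_h[x_h]_+-\sum_{h\in N\setminus A}c_h[x_h]_+ .
\]
Next we unfold the folded weights of Equation~\eqref{eq:fold}. Each $h\in P$ has exactly one anchor $j_h^\star\in R$, so
\[
 \sum_{j\in R}\widetilde w_j[\q_j^\top\kv]_++E_P\norm{\kv}=S_R(\kv)+\sum_{h\in P}w_hB_h(\kv).
\]
This gives $U(\kv)=S_R(\kv)+\sum_{h\in P}w_hB_h(\kv)-[\mathbf a_A^\top\kv]_+$. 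If the joint or grouped allowance replaces $E_P$, the same bookkeeping goes through with $B_h$ redefined accordingly. We state the identity for Equation~\eqref{eq:bound} as written.

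Subtracting, the $S_R$ terms cancel and the $P$ terms combine into $\delta_P(\kv)$. The missed negatives contribute $\sum_{h\in N\setminus A}c_h[x_h]_+$ exactly. What remains is
\[
 Z_A^+(\kv)-[\mathbf a_A^\top\kv]_+ .
\]
The key step is to show this equals $\min\{Z_A^+,Z_A^-\}$. Since $c_hx_h=c_h[x_h]_+-c_h[-x_h]_+$, we have $\mathbf a_A^\top\kv=Z_A^+-Z_A^-$, and hence $[\mathbf a_A^\top\kv]_+=\max\{Z_A^+-Z_A^-,0\}$. Then
\[
 Z_A^+-\max\{Z_A^+-Z_A^-,0\}=\min\{Z_A^-,Z_A^+\}.
\]
This covers $A=\varnothing$ trivially, since both sides are then zero.

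Finally we check nonnegativity of each component. The aggregate-cancellation and missed-negative terms are nonnegative because $c_h\ge0$ and both $Z_A^\pm\ge0$. For $\delta_P$, each summand satisfies $B_h(\kv)-[x_h]_+\ge0$ by the positive-head projection bound with $\beta_h\ge0$, and we multiply by $w_h>0$.

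The only delicate points are bookkeeping. First, each omitted positive head must be folded into exactly one anchor so that the unfolding is exact. Second, one must recognize the $\max$/$\min$ identity for the aggregate ReLU. I expect no real obstacle beyond stating these carefully, including the zero-norm anchor convention ($\beta_h=0$, $\mathbf d_h=\q_h$), which keeps $B_h$ well defined.
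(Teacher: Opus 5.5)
Your proof is correct and follows the paper's argument. The retained terms cancel, the omitted positive heads leave $\delta_P\ge0$ by the projection bound, and the negative remainder reduces to the missed active negatives plus $Z_A^+-[Z_A^+-Z_A^-]_+=\min\{Z_A^+,Z_A^-\}$, which is the identity the paper uses; your explicit unfolding of the folded weights $\widetilde w_j$ simply spells out what the paper summarizes as ``the retained contributions cancel.'' One small caveat concerns your aside, which lies outside the proposition as stated: the joint allowance $E_P^{\rm joint}$ bounds only the sum over $P$, so it does not split into per-head $B_h$ with nonnegative headwise slack, which is why the paper says only that the positive slack term changes while staying nonnegative.
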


\begin{proof}[Proof of Proposition~\ref{prop:slack}]
The retained contributions cancel in $U-s$. The positive remainder is $\delta_P\ge0$ by Equation~\eqref{eq:positive}. The negative remainder is
\begin{align*}
 \delta_N(\kv)
 &=\sum_{h\in N}c_h[x_h]_+-[Z_A^+-Z_A^-]_+\\
 &=\sum_{h\in N\setminus A}c_h[x_h]_+
   +\min\{Z_A^+,Z_A^-\},
\end{align*}
where the last equality uses $z-[z-y]_+=\min(z,y)$ for $z,y\ge0$. This proves Equation~\eqref{eq:slack} and nonnegativity of all its components.
\end{proof}

Let $M(\kv)=\{h\in N:x_h>0\}$ be the active omitted negative heads. Choosing $A=M(\kv)$ makes both negative slack terms zero; including inactive heads can instead introduce cancellation. The positive headwise decomposition above uses the triangle allowance $E_P$. Replacing it with a valid joint allowance preserves nonnegative slack but changes its positive term.

\paragraph{Support and retained-head selection.}
For any support, the negative slack obeys
\begin{align}
 0\le\delta_N(\kv)
 &\le\sum_{h\in A\triangle M(\kv)}c_h|\q_h^\top\kv|\nonumber\\
 &\le\norm{\kv}\sum_{h\in A\triangle M(\kv)}a_h,
 \qquad a_h=c_h\norm{\q_h},
 \label{eq:app_disagreement}
\end{align}
where $\triangle$ denotes symmetric difference. The first inequality bounds aggregate cancellation by $Z_A^-$; the second is Cauchy--Schwarz.

For nonzero sample keys, let $p_h$ be the observed activation frequency. Averaging the last expression in Equation~\eqref{eq:app_disagreement} yields the separable surrogate
\begin{equation}
 J(A)=\sum_{h\in A}a_h(1-p_h)+\sum_{h\in N\setminus A}a_hp_h.
 \label{eq:app_surrogate}
\end{equation}
If the masks are the signs of the same real dot products, $J(A)$ upper-bounds the sample mean of $\delta_N(\kv)/\norm{\kv}$. Each head independently contributes either $a_h(1-p_h)$ or $a_hp_h$, so componentwise majority, $A^*=\{h:p_h\ge1/2\}$, minimizes $J$. The minimum is $\sum_h a_h\min(p_h,1-p_h)$.

If all positive heads are retained, evaluating $b$ additional negative heads removes their surrogate terms. Retaining the $b$ largest $a_h\min(p_h,1-p_h)$ therefore minimizes the remaining surrogate jointly with majority support. This optimizes the sampled disagreement bound, not survivor count or latency. Rounded activation signs near zero can affect the surrogate interpretation, but any resulting support remains safe.

\subsection{Selection margins and screening cost}
\label{app:margin}
\label{app:budget_marginal}
Let $\tau^*$ be the full-domain $K$th score, $\Delta_i=\tau^*-s_i$ a key's selection margin, $b=\tau^*-\tau\ge0$ the threshold deficit, and $\delta_i=U_i-s_i\ge0$ the bound slack. Then
\begin{equation}
 \boxed{U_i<\tau
 \quad\Longleftrightarrow\quad
 \Delta_i>\delta_i+b.}
 \label{eq:margin}
\end{equation}
A stronger seed reduces $b$; a tighter certificate reduces $\delta_i$. If $\delta_i\le\epsilon$, survivors lie within $\{i:s_i\ge\tau^*-\epsilon-b\}$. This boundary mass need not be small, and tied keys survive automatically. The same relation holds for computed scores $\sref_i$ and valid numerical bounds $U_i^{\rm num}$; keys without valid bounds are retained separately.

\paragraph{Arithmetic cost and head budget.}
The survivor count determines whether screening saves arithmetic. Let $N_0$ keys be fully scored during initialization, $N_s$ be screened, and $M$ survive screening. With $d_a$ auxiliary rows, logical dot-product counts are approximately
\begin{align}
 D_{\mathrm{base}}&=H(N_0+N_s),\\
 D_{\method}&=HN_0+(r+d_a)N_s+HM.
 \label{eq:work}
\end{align}
Writing $f=N_0/(N_0+N_s)$ and $m=M/N_s$, normalized work is
\begin{equation}
 \frac{D_{\method}}{D_{\mathrm{base}}}
 \approx f+(1-f)\left(\frac{r+d_a}{H}+m\right).
 \label{eq:ratio}
\end{equation}
Screening saves logical dot products only when $m<1-(r+d_a)/H$. More precisely, let $p(r)=1-m(r)$ be rejection at budget $r$. Within this logical-work model, for fixed initialization and auxiliary-row count,
\begin{equation}
 D(r+\Delta r)<D(r)
 \quad\Longleftrightarrow\quad
 p(r+\Delta r)-p(r)>\frac{\Delta r}{H}.
 \label{eq:budget_marginal}
\end{equation}
An additional head pays off only if it avoids enough full rescoring. If the auxiliary-row count changes, replace $\Delta r$ by $\Delta r+\Delta d_a$; changed anchors or supports need not improve rejection monotonically. This model excludes planning, key summaries, Tensor Core padding, selection, and recovery. Every key is still scanned, and screening adds work if all keys survive.

\section{Numerical Conditions and Recovery}
\label{app:numerical}
This section gives sufficient conditions for preserving exact selection under the original scorer's arithmetic, together with candidate-storage and recovery requirements.

\subsection{Reference scores and the FP8 certificate}
\label{sec:numerical}
Let $\sref_i$ be the original indexer's computed score. Numerical dominance $\sref_i\le U_i^{\rm num}$ must be established separately from $s_i\le U_i$. The tie-order requirements of Theorem~\ref{thm:exact} also apply to computed scores, and equality survives pruning.

Theorem~\ref{thm:exact} applies with $\sref_i$ and $U_i^{\rm num}$ when the seed threshold and candidate scores use the same original scorer and every key rejected during screening satisfies $\sref_i\le U_i^{\rm num}<\tau$. Failure handling is specified in Appendix~\ref{app:recovery}.

Write an FP8 key as $\kv_i=\alpha_i\bar{\kv}_i$, where $\alpha_i>0$ and $\bar{\kv}_i$ is its $D$-dimensional payload. In the base form of Equation~\eqref{eq:bound}, two auxiliary rows approximate the negative aggregate:
\begin{equation}
 \mathbf a_A=\mathbf v_0+\mathbf v_1+\mathbf e_A.
 \label{eq:dd8}
\end{equation}
Their dot products are added before ReLU. With stored folded weights $\omega_j$, the base screening expression in payload units is
\begin{equation}
 \widehat U_i=\operatorname{fl}_{\rm screen}\!\left(
 \sum_{j\in R}\omega_j[\q_j^\top\bar{\kv}_i]_+
 -[(\mathbf v_0+\mathbf v_1)^\top\bar{\kv}_i]_+\right).
 \label{eq:screen_program}
\end{equation}
\paragraph{Absolute-value form.}
When every positive-weight head is retained, $P=\varnothing$. Set $\mathbf V=\mathbf a_A-\tfrac12\sum_{j\in R}w_j\q_j$. Using $[x]_+=(x+|x|)/2$,
\begin{align}
 U_{\rm abs}(\kv)
 &=\tfrac12\sum_{j\in R}w_j|\q_j^\top\kv|-\mathbf V^\top\kv\nonumber\\
 &=S_R(\kv)-\mathbf a_A^\top\kv
 \ge S_R(\kv)-[\mathbf a_A^\top\kv]_+
 \ge s(\kv).
 \label{eq:abs_bound}
\end{align}
The additional slack relative to Equation~\eqref{eq:bound} is $[-\mathbf a_A^\top\kv]_+$. The 32-head implementation uses this form for $r=14,16$; the $r=20$ path retains the base form with positive projection. Encoding $\mathbf V=\mathbf v_0+\mathbf v_1+\mathbf e_V$ with two scaled FP8 rows gives
\begin{equation}
 \widehat U_i=\operatorname{fl}_{\rm screen}\!\left(
 \tfrac12\sum_{j\in R}w_j|\q_j^\top\bar{\kv}_i|
 -(\mathbf v_0+\mathbf v_1)^\top\bar{\kv}_i\right).
 \label{eq:abs_screen_program}
\end{equation}
Both forms use $r+2$ logical dot products; survivors receive the original $H$-head scores. The numerical allowances cover auxiliary-vector representation, weight formation, and arithmetic errors, together with any omitted positive residuals.

Let $\rho_i\ge\norm{\bar{\kv}_i}/\sqrt D$ be a conservative key RMS summary. A numerical certificate has the form
\begin{equation}
 U_i^{\mathrm{num}}=
 \operatorname{Eval}_{\rm up}\!\left[
 \alpha_i\bigl(\widehat U_i+\rho_i C+\varepsilon_{\mathrm{raw}}\bigr)
 +\varepsilon_{\mathrm{out}}\right].
 \label{eq:fp8}
\end{equation}
The nonnegative allowances $C$, $\varepsilon_{\rm raw}$, and $\varepsilon_{\rm out}$ satisfy Appendix~\ref{app:fp8_accounting}. The operator $\operatorname{Eval}_{\rm up}$ encloses the expression upward through directed rounding or explicit error bounds; round-to-nearest alone is insufficient. These conditions ensure $\sref_i\le U_i^{\rm num}$ for either screening expression.

\subsection{Error allowances and conservative key summaries}
\label{app:fp8_accounting}
Let $E_g^+$ be an upper enclosure of a valid positive residual allowance for group $g$, using the grouped triangle or joint bound in Appendix~\ref{app:certificate_validity}. Residual formation and group norms must be enclosed upward; joint bounds additionally require conservative weighted residuals, sums, Gram entries, and final norms. Suppose a nonnegative coordinate envelope $\mathbf J$ and absolute allowances satisfy
\begin{equation}
 \sref_i\le\alpha_i\!\left(
 \widehat U_i+\sum_g E_g^+\norm{\bar{\kv}_i^{(g)}}
 +\sum_d J_d|\bar k_{i,d}|+\varepsilon_{\rm raw}\right)
 +\varepsilon_{\rm out}.
 \label{eq:fp8_error_contract}
\end{equation}
The envelope covers auxiliary-vector reconstruction, weight formation, screening arithmetic, and the original scorer's arithmetic. In the base form, aggregate error contributes $|e_{A,d}|$ and folded-weight errors contribute $\sum_{j\in R}|\widetilde w_j-\omega_j||q_{j,d}|$ to $J_d$, by ReLU's Lipschitz property. In the absolute-value form, $E_g^+=0$ and the auxiliary-vector contribution is $|e_{V,d}|$; the absolute-value and linear terms admit the same coordinate-envelope treatment. Underflow and other absolute errors enter $\varepsilon_{\rm raw}$ or $\varepsilon_{\rm out}$ according to whether they occur before or after key scaling.

Applying Cauchy--Schwarz within groups and then across groups shows that a sufficient RMS coefficient is any conservative enclosure
\begin{equation}
 C\ge\sqrt D\left(\sum_g\bigl[E_g^++\norm{\mathbf J^{(g)}}\bigr]^2\right)^{1/2}.
 \label{eq:fp8_coefficient}
\end{equation}
Together with conservative $\rho_i$, this establishes Equation~\eqref{eq:fp8}. In the absolute-value paths, directed FP64 intervals enclose $\mathbf V$ and its reconstruction residual. The planner combines these with screening and reference-arithmetic allowances and rounds $C$ outward to FP32; the kernels evaluate the correction and positive key scaling with upward rounding.

\paragraph{Conservative key RMS.}
The stored summary must satisfy $\rho_i\sqrt D\ge\norm{\bar{\kv}_i}$. This requires a conservative norm computation and upward coverage in the stored summary format; a nearest-rounded RMS is insufficient. Nonfinite payloads or a failure to establish coverage invalidate the certificate.

\subsection{Candidate storage and full-domain recovery}
\label{app:recovery}
The final output contains $K$ entries, but candidates awaiting full scoring or selection may be far more numerous. A buffer capacity $C_{\rm buf}$ does not limit how many keys satisfy $U_i\ge\tau$: if all scores equal $\tau$, every key passes even with exact bounds. A full buffer must be processed and reduced through exact Top-$K$ merging of original scores, extended, or handled by full-domain recovery. Dropping overflow candidates can lose required Top-$K$ IDs. Chunked processing must preserve the selection and tie-order contract of Algorithm~\ref{alg:dosa}.

Invalid scales, RMS summaries, or nonfinite bounds prevent pruning. Query-level errors remain flagged until full-domain recovery, which evaluates the original scorer and selector over all visible keys, including those discarded before error detection. Re-selecting from an incomplete buffer is insufficient. Exceptional values follow the backend's policy; the finite-score theorem does not define NaN semantics.

\section{Additional Experimental Analysis}
\label{app:experimental_details}
\subsection{Indexer profiling methodology}
\label{app:intro_profile}
Figure~\ref{fig:indexer_cost} uses real model weights and text with DoSA and LiteTopK disabled. Each model--length pair contains two independent requests with a cleared prefix cache. Time is the mean over requests and tensor-parallel ranks of Nsight Systems GPU intervals, cross-checked with CUDA events. The indexer interval includes query/key preparation, key-cache access, scoring, and selection; the denominator is the complete model prefill GPU interval. These quantities exclude CPU tokenization and request transport and are distinct from TTFT.

GLM-5.2 uses 32K prefill chunks and DeepSeek-V3.2 uses 16K chunks, both with TP8. FLOPs count causal logical matrix work, including active experts and sparse attention, with an FMA counted as two operations and no duplication for tensor parallelism. Top-$K$ comparisons, elementwise operations, and padding are not matrix FLOPs. Here 1K denotes 1,024 tokens, 375K denotes 384,000 tokens, and approximately 1M denotes 1,048,569 tokens. DeepSeek measurements beyond its declared context support a computational scaling observation, not a claim about task accuracy.

\subsection{Certificate component ablation}
\label{sec:certificate_ablation}
We isolate the contributions of the certificate components on the per-domain sample sets described in Section~\ref{sec:experiments}. For each of the 21 indexer-producing layers of GLM-5.2, we replay the last 4K queries with the same initial candidate pool. We compare \method{} with four variants: aggregating all omitted negative heads, removing the negative aggregate, selecting retained negative heads by strength alone, and removing positive-head projection. Each variant is timed ten times per input and layer.

\begin{figure}[htbp]
\centering
\includegraphics[width=\linewidth]{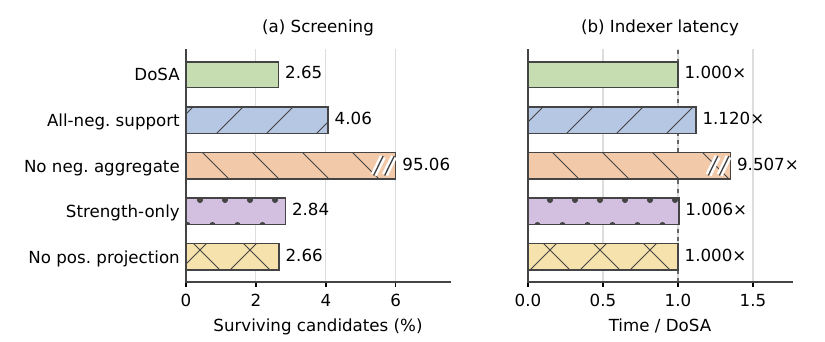}
\caption{Effects of certificate components on screening and Indexer latency. Bars show the mean candidate survival rate and geometric mean latency relative to \method{}. Breaks mark bars beyond the displayed range; annotations give their measured values.}
\label{fig:certificate_mechanism}
\end{figure}

Figure~\ref{fig:certificate_mechanism} shows that removing negative aggregation raises candidate survival from 2.65\% to 95.06\% and increases Indexer time by $9.51\times$. These timings include recovery when the candidate pool fills. Aggregating all negative heads also weakens filtering: survival rises to 4.06\% and latency increases by 12.0\%, supporting the use of activation-informed support. Strength-only head selection and removal of positive-head projection have much smaller aggregate effects in this workload. Positive-head projection is required by only 0.42\% of the evaluated query-layer rows. Appendix~\ref{sec:certificate_layers} breaks down these effects by layer.

\subsection{Layer-wise certificate behavior}
\label{sec:certificate_layers}
Figure~\ref{fig:certificate_layers} decomposes the experiment in Appendix~\ref{sec:certificate_ablation} by layer, aggregating over the same per-domain sample sets. Removing the negative aggregate leaves nearly all candidates alive in most layers. The cost of aggregating all negative heads is most pronounced in early layers: at layer 2, its relative Indexer latency reaches $3.41\times$.

\begin{figure}[htbp]
\centering
\includegraphics[width=\linewidth]{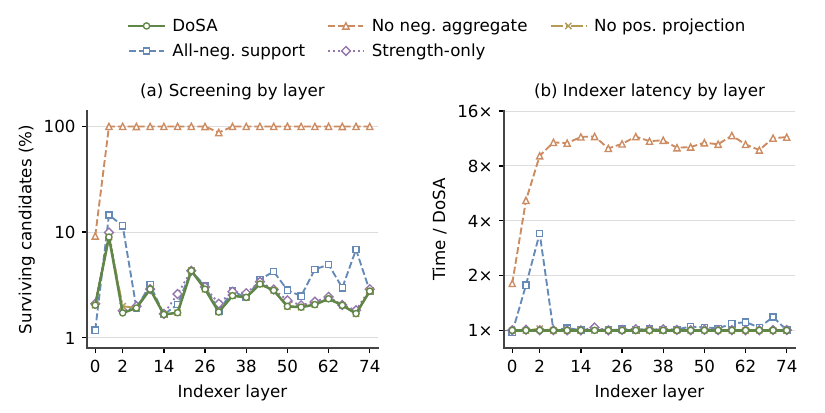}
\caption{Layer-wise effects of certificate components on candidate survival and Indexer latency. Each layer aggregates the same sample sets. Both vertical axes use logarithmic scales.}
\label{fig:certificate_layers}
\end{figure}

\newpage
\subsection{Layer-wise retrieval speedups}
\label{app:retrieval_layers}
For the GLM-5.2 sample sets used in the certificate analysis, we replay the final 32K queries across all 21 indexer-producing layers (252 cells). Figures~\ref{fig:api_layers_sglang} and~\ref{fig:api_layers_lite} divide baseline median latency by \method{} median latency over 12 replays per method and cell. Timings cover scoring and selection on prepared queries and keys, excluding query/key preparation. The heatmaps resolve variation across inputs and layers.

\paragraph{Measurement scope.}
The matrix retains 247 cells from the original run and replaces five cells with matched measurements of all three methods after a candidate-storage capacity repair. The remeasured cells are EN01 at layer 26 and EN02, EN04, ZH04, and CD04 at layer 1. The repair preserves the scoring and selection rules, four-anchor initialization, and 16K internal query scratch. At approximately 1M-key capacity, it reserves 1,024 candidate-ID pages per query and 114,688 output records, adding 9.94\,GiB of workspace relative to the original allocation. The full matrix was not rerun after the repair.

Within these measured cells, \method{} is faster in 251 of 252 comparisons (99.6\%) against SGLang and 250 of 252 (99.2\%) against LiteTopK. All 21 layers achieve at least $2\times$ speedup over SGLang on one or more inputs, with gains reaching $3.74\times$; speedups over LiteTopK reach $2.08\times$. The three slower comparisons add 0.7--9.0\% latency and are retained in the figures. Across the input samples, every layer has a median speedup above $1\times$ against both baselines.

\begin{figure}[p]
\centering
\includegraphics[width=\linewidth]{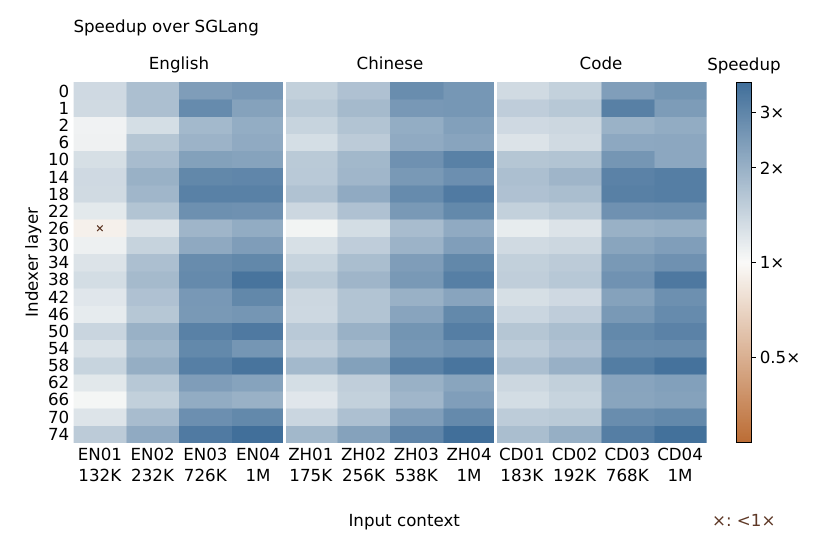}
\caption{Per-layer retrieval speedup over SGLang for the final 32K queries. Columns identify inputs and context lengths; rows identify indexer layers. Crosses mark cells slower than the baseline. The logarithmic color scale is centered at $1\times$. Five cells use the capacity-repair remeasurements specified in Appendix~\ref{app:retrieval_layers}.}
\label{fig:api_layers_sglang}
\medskip
\includegraphics[width=\linewidth]{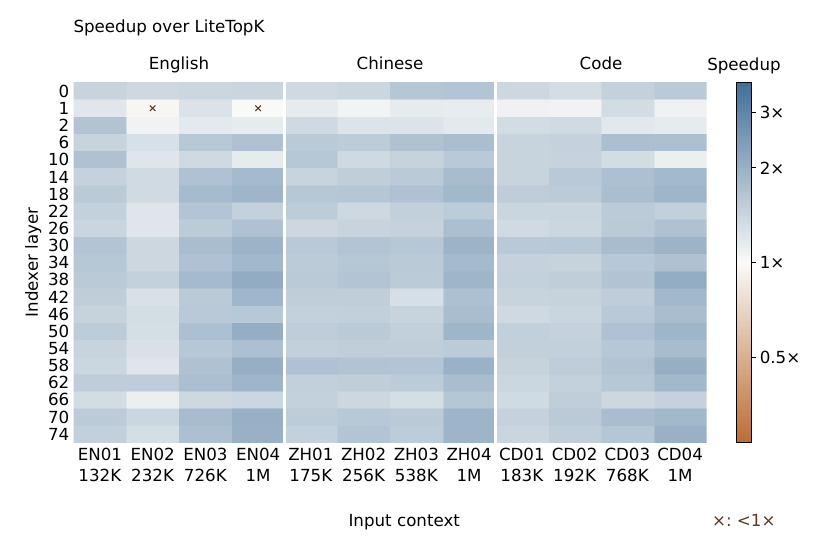}
\caption{Per-layer retrieval speedup over LiteTopK. Input and layer ordering, timing scope, remeasured cells, and color scale match Figure~\ref{fig:api_layers_sglang}; crosses retain the two comparisons slower than the baseline.}
\label{fig:api_layers_lite}
\end{figure}

\end{document}